\documentclass{article}

\usepackage{microtype}
\usepackage{graphicx}
\usepackage{subcaption}
\usepackage{booktabs} 

\usepackage{hyperref}

\usepackage[preprint]{icml2026}

\usepackage{amsmath}
\usepackage{amssymb}
\usepackage{mathtools}
\usepackage{amsthm}

\usepackage{wrapfig}
\usepackage{caption}
\usepackage{makecell}
\usepackage{pifont}

\usepackage{stmaryrd}
\renewcommand{\mathcal}{\mathscr}
\usepackage[mathscr]{eucal}
\newcommand{\mbf}{\mathbf}
\newcommand{\inv}{^{-1}}
\newcommand{\diag}{\operatorname{diag}}
\newcommand{\symP}{\tilde{\mbf P}}
\newcommand{\tentry}[3]{$#1_{-#2}^{+#3}$}
\newcommand{\btentry}[3]{$\mbf{#1}_{-\mbf{#2}}^{+\mbf{#3}}$}
\newcommand{\E}{\mathbb{E}}
\newcommand{\placeholder}{\mathrel{\phantom{=}}}
\newcommand{\tick}{\ding{51}}
\newcommand{\cross}{\ding{55}}

\usepackage[capitalize,noabbrev]{cleveref}

\theoremstyle{plain}
\newtheorem{theorem}{Theorem}[section]
\newtheorem{proposition}[theorem]{Proposition}

\newtheorem{corollary}[theorem]{Corollary}
\theoremstyle{definition}

\theoremstyle{remark}
\newtheorem{remark}[theorem]{Remark}

\usepackage[textsize=tiny]{todonotes}

\icmltitlerunning{DROGO: Default Representation Objective via Graph Optimization}

\begin{document}

\twocolumn[
  \icmltitle{DROGO: Default Representation Objective via Graph Optimization\\in Reinforcement Learning}



  \icmlsetsymbol{equal}{*}

  \begin{icmlauthorlist}
    \icmlauthor{Hon Tik Tse}{uofa,amii}
    \icmlauthor{Marlos C. Machado}{uofa,amii,cifar}
  \end{icmlauthorlist}

  \icmlaffiliation{uofa}{Department of Computing Science, University of Alberta}
  \icmlaffiliation{amii}{Alberta Machine Intelligence Institute (Amii)}
  \icmlaffiliation{cifar}{Canada CIFAR AI Chair}

  \icmlcorrespondingauthor{Hon Tik Tse}{hontik@ualberta.ca}

  \icmlkeywords{Reinforcement Learning, Representation Learning, Default Representation, Successor Representation}

  \vskip 0.3in
]



\printAffiliationsAndNotice{}  

\begin{abstract}
  In computational reinforcement learning, the default representation (DR) and its principal eigenvector have been shown to be effective for a wide variety of applications, including reward shaping, count-based exploration, option discovery, and transfer. 
  However, in prior investigations, the eigenvectors of the DR were computed by first approximating the DR matrix, and then performing an eigendecomposition. This procedure is computationally expensive and does not scale to high-dimensional spaces. In this paper, we derive an objective for directly approximating the principal eigenvector of the DR with a neural network. We empirically demonstrate the effectiveness of the objective in a number of environments, and apply the learned eigenvectors for reward shaping.
\end{abstract}

\section{Introduction}
Learning representations is key to reinforcement learning (RL). The {\it successor representation}~\citep[SR;][]{dayan1993improving} is a promising temporal predictive representation, which represents a state as the expected discounted number of visits to successor states. In recent years, it has been shown to be effective in applications such as reward shaping~\citep{wulaplacian, wang2021towards} and exploration~\citep{machado2020count, machado2023temporal}. Beyond computational RL, evidence also suggests its relevance in human decision-making and neuroscience~\citep{momennejad2017successor, stachenfeld2014design, stachenfeld2017hippocampus, gomez2025representations}. \looseness=-1

The SR essentially isolates the temporal aspect of the problem from the reward. Given this property, once approximated, it can be used to perform flexible transfer to different reward functions. This idea underlies many recent transfer methods~\citep{barreto2017successor, touati2021learning}. However, for applications focusing on a single reward function, the SR can be limiting. \citet{tse2025rewardaware} have shown that when applied to such applications, the SR's reward-agnostic nature can lead to undesirable behavior and poor performance. \looseness=-1

\begin{figure}[t]
  \vskip 0.2in
  \begin{center}
    \centerline{\includegraphics[width=0.7\columnwidth]{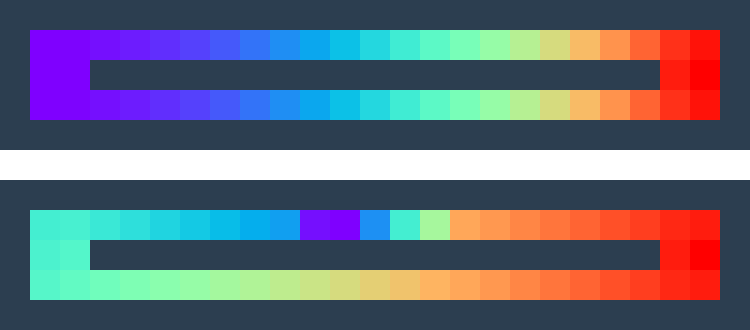}}
    \caption{
      The principal eigenvectors of the SR (top) and DR (bottom) in a ring-shaped environment with a low-region region at the top of the ring. The DR, unlike the SR, is reward-aware and captures the low-reward region in its principal eigenvector.
    }
    \label{fig:dr_vs_sr}
  \end{center}
\end{figure}

One remedy is to use reward-aware representations for these applications. In particular, the default representation \citep[DR;][]{piray2021linear} can be viewed as a reward-aware generalization of the SR (see Fig.~\ref{fig:dr_vs_sr}). While the SR captures the number of transitions required to travel between states, the DR captures the rewards obtained when traveling between states. The DR has been shown to give rise to reward-aware behavior in various applications. For example, options induced by the DR take detours to avoid low-reward regions in the environment~\citep{tse2025rewardaware}. \looseness=-1

However, the study of the DR is limited to the tabular setting. While applications such as reward shaping and option discovery rely on the principal eigenvector of the DR, it is always computed by first approximating the full DR matrix, before performing an eigendecomposition. Such a procedure is computationally expensive, and does not scale to environments with high-dimensional observations. \looseness=-1


In this paper, we address these issues by deriving a novel objective, \textbf{D}efault \textbf{R}epresentation \textbf{O}bjective via \textbf{G}raph \textbf{O}ptimization (DROGO). Using DROGO, we can train a neural network to directly approximate the principal eigenvector of the default representation. DROGO optimizes a variant of the graph-drawing objective~\citep[GDO;][]{koren2003spectral}, where we reparameterize GDO in the log-space. We derive the natural gradient~\citep{amari1998natural, amari1998natural2} to account for the inherent geometry of the log-space, which, left untreated, leads to significant instabilities during optimization. We also apply an alternative constraint on the solution instead of the usual quadratic norm penalty~\citep{wulaplacian}, and provide theoretical justification.

Further, we empirically demonstrate the effectiveness of DROGO in a set of grid world environments. We apply DROGO to learn the principal eigenvector when using different state representations as input to the neural network, such as coordinates and pixels, demonstrating DROGO's robustness. Finally, we demonstrate the utility of the learned eigenvectors by applying them for reward shaping.

\section{Preliminaries}
In this paper, we use lowercase symbols (e.g., $r, p$) to denote functions or constants (e.g., $c, \epsilon$), calligraphic font (e.g, $\mathcal{S}, \mathcal{A}$) to denote sets, bold lowercase symbols (e.g., $\mbf r, \mbf e$) to denote vectors, and bold uppercase symbols (e.g., $\mbf \Psi, \mbf Z$) to denote matrices. We index the $(i, j)$-th entry of a matrix $\mbf A$ by $\mbf A(i, j)$, and index the $i$-th entry of a vector $\mbf e$ by $e(i)$.

\subsection{Default Representation}
We consider linearly solvable Markov decision processes~\citep[LS-MDPs;][]{todorov2006linearly, todorov2009efficient} $(\mathcal{S}, \mathcal{A}, r, p, \pi_d)$, where $\mathcal{S}$ is the state space, $\mathcal{A}$ is the action space, $r:\mathcal{S} \to \mathbb{R}_{< 0}$ is the reward function, $p: \mathcal{S} \times \mathcal{A} \to \Delta(\mathcal{S})$ is the transition function, and $\pi_d$ is the default policy. Let $\mathcal{S}_T \subset \mathcal{S}$ be the set of terminal states. We assume $\mathcal{S}_T$ is non-empty. We assume terminal states to be absorbing states and have a negative reward (see Appendix~\ref{appendix:formulation_diff} for a detailed discussion).

Let $\mbf r \in \mathbb{R}_{< 0}^{|\mathcal{S}|}$ be the vector of state rewards, and $\mbf{P}^{\pi_d} \in \mathbb{R}^{|\mathcal{S}| \times |\mathcal{S}|}$ be the transition probability matrix induced by $\pi_d$ and $p$. The default representation~\citep[DR;][]{piray2021linear, tse2025rewardaware}, defined with respect to $\pi_d$ and denoted by $\mbf Z$, is $\mbf Z = [\operatorname{diag}(\exp(- \mbf r / \lambda)) - \mathbf{P}^{\pi_d}]\inv$, where $\lambda>0$ is a hyperparameter. We use $\mbf R$ to denote $\operatorname{diag}(\exp(- \mathbf{r} / \lambda))$, and omit $\pi_d$ as it is clear from context, so we write $\mbf Z = (\mbf R - \mbf P)\inv$. In practice, we symmetrize the DR by replacing $\mbf P$ with $\tilde {\mbf P } = (\mbf P + \mbf P^\top) / 2$ to ensure a real eigensystem. 

The DR has been applied in a variety of settings. In particular, its principal eigenvector (corresponding to the largest eigenvalue), $\mbf e$, has been used for reward shaping and option discovery.
However, the entries of $\mbf e$ can have very small magnitudes, limiting its practical use without additional transformations. For example, simply applying $\mbf e$ as a potential function for potential-based reward shaping~\citep{ng1999policy} gives a shaping reward that is negligible almost everywhere.
To address this, \citet{tse2025rewardaware} proved, for the discrete setting, that the principal eigenvector of the symmetrized DR is positive, and, in their experiments, used $\log \mbf e$ instead, where $\log$ here denotes entry-wise logarithm.
Similarly, in this work, {\it we are interested in the logarithm of the principal eigenvector of the default representation}. \looseness=-1

\subsection{Graph Drawing Objective}
Spectral graph drawing~\citep{koren2003spectral} is a class of methods for visualizing graphs using the eigenvectors of matrices associated with the graph.
These methods have been applied in RL since an MDP can be viewed as a graph, where the vertices are the states, and the edge weights are the transition probabilities between states induced by $p$ and some policy~$\pi$. Notably, the graph drawing objective (GDO) and its variants have been applied to retrieve the eigenvectors of the Laplacian, $\mbf L = \mbf I - \tilde{\mbf P}^\pi$~\citep{wulaplacian, wang2021towards, gomez2024proper}. Specifically, we can retrieve the smallest eigenvector (corresponding to the smallest eigenvalue)\footnote{Note that this is equivalent to retrieving the principal eigenvector of the matrix's inverse, as is the case for the DR.} of a matrix $\mbf M \in \mathbb{R}^{|\mathcal{S}| \times |\mathcal{S}|}$ through the following objective: \looseness=-1
\begin{align}
    \min_{\mbf u \in \mathbb{R}^{|\mathcal{S}| \times |\mathcal{S}|}} \mbf u^\top \mbf M \mbf u \quad \text{s.t.}\quad \mbf u^\top \mbf u = c, \label{eq:gdo}
\end{align}
where $\mbf u$ is the learned eigenvector, and $c$ is a constant. In this work, we adapt the GDO for learning the logarithm of the principal eigenvector of the DR. \looseness=-1

\subsection{Natural Gradient}
Natural gradient descent~\citep{amari1998natural, amari1998natural2} improves convergence properties over standard gradient descent by adjusting the update direction to account for the underlying structure of the parameter space. 

Let $\mathcal{P} = \{ \mbf w \in \mathbb{R}^n \}$ be a parameter space, 
and the optimization problem be
$\min_{\mbf w} \ell(\mbf w)$, where $\ell$ is some loss function.
Standard gradient descent makes the simplifying assumption that the distance between two points in $\mathcal{P}$ is Euclidean. The descent direction, $\delta \mbf w$, is determined by the following optimization problem, where $\epsilon$ denotes some small constant:
\begin{align}
    \min_{\delta \mbf w } \ell(\mbf w) + \nabla \ell(\mbf w)^\top \delta \mbf w\quad \text{s.t.} \quad \|\delta\mbf w\|_2^2 = \epsilon^2.
\end{align}
That is, we search for the descent direction $\delta \mbf w$ that locally minimizes $\ell$, under the constraint that the Euclidean distance between the updated and original parameters is small. The descent direction $\delta \mbf w$ is then proportional to $-\nabla \ell(\mbf w)$.

Natural gradient descent considers a more general notion of distance, where the distance between $\mbf w$ and $\mbf w + \delta \mbf w$ is given by $\sqrt{\delta\mbf w^\top \mbf G(\mbf w) \delta \mbf w}$. $\mbf G(\mbf w)$ is known as the Riemannian metric tensor, which captures the local curvature of $\mathcal{P}$. Note that the Euclidean distance is a special case when $\mbf G = \mbf I$. The descent direction is now determined by
\begin{align}
    \min_{\delta \mbf w } \ell(\mbf w) + \nabla \ell(\mbf w)^\top \delta \mbf w\ \  \text{s.t.} \ \  \delta \mbf w^\top \mbf G(\mbf w) \delta \mbf w = \epsilon^2.
\end{align}
While we still constrain the distance between the updated and original parameters, the distance now reflects the local structure of the parameter space. The descent direction is then proportional to $-\mbf G(\mbf w)\inv \nabla \ell(\mbf w)$. 

\section{DROGO}
Prior work~\citep{tse2025rewardaware} adopts a computationally expensive procedure to obtain the logarithm of the principal eigenvector of the DR, $\log \mbf e$. Such an approach does not scale to high-dimensional spaces.
To address this, we introduce an objective for directly learning, using only transitions, the logarithm of $\mbf e$. We slowly build up our objective through theory developed in several sections: Section~\ref{sec:gdo} describes adapting the GDO for the DR. Section~\ref{sec:log_param} describes log-space parameterization to directly learn $\log \mbf e$. Section~\ref{sec:natural_grad} describes applying natural gradient to account for the log-space parameterization. Section~\ref{section:anchor} introduces an alternative constraint and provides theoretical justification. We conclude by introducing the final objective in Section~\ref{sec:drogo}.
\looseness=-1

\subsection{Graph Drawing Objective}
\label{sec:gdo}
Recall that we are interested in learning the {\it logarithm} of the principal eigenvector, $\log \mbf e$, of the symmetrized DR, $(\mbf R - \symP)\inv$. A naive approach is to first learn $\mbf e$ using the GDO, and then apply the logarithm. This is a necessary but not sufficient step towards our final objective, as we discuss below. \looseness=-1

To construct the GDO for the DR, we have:
\begin{proposition}
  \label{prop:1}
  The principal eigenvector of $(\mbf R - \tilde{\mbf P})\inv$ is equivalent to the smallest eigenvector of $\mbf I + \mbf R - \symP$.
\end{proposition}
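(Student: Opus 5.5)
Let $\mbf A = \mbf R - \symP$. The plan is to show that $\mbf A$ is symmetric and positive definite, so that $\mbf A\inv$ exists, is symmetric positive definite, and has the same eigenvectors as $\mbf A$ with reciprocal eigenvalues. Then adding $\mbf I$ shifts every eigenvalue of $\mbf A$ by one and leaves the eigenvectors unchanged.

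\emph{Symmetry.} $\mbf R$ is diagonal, and $\symP = (\mbf P + \mbf P^\top)/2$ is symmetric by construction. Hence $\mbf A$ is real symmetric and, by the spectral theorem, has a real orthonormal eigenbasis $\{\mbf v_i\}$ with real eigenvalues $\mu_i$.

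\emph{Positive definiteness.} This is the step I expect to need the most care. Since $r(s) < 0$ for every state, each diagonal entry $\exp(-r(s)/\lambda)$ of $\mbf R$ is strictly greater than $1$. Since $\mbf P$ is row-stochastic, $\|\mbf P\|_\infty = 1$. Also $\|\mbf P^\top\|_\infty = \|\mbf P\|_1$, which could exceed $1$, so a row-sum bound does not directly control $\symP$. I would instead bound the quadratic form: for any $\mbf x$,
\[
\mbf x^\top \symP \mbf x = \mbf x^\top \mbf P \mbf x = \sum_{i,j} \mbf P(i,j)\, x(i)\, x(j).
\]
By $|x(i) x(j)| \le (x(i)^2 + x(j)^2)/2$, this is at most $\tfrac12\sum_i x(i)^2 + \tfrac12\sum_j x(j)^2 \sum_i \mbf P(i,j)$. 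The column sums $\sum_i \mbf P(i,j)$ need not be bounded by $1$, so this route is not immediate. As a fallback, I would use Gershgorin on the symmetric matrix $\mbf A$ directly. Row $i$ has diagonal $\exp(-r(i)/\lambda) - \symP(i,i)$, and its off-diagonal absolute sum is $\sum_{j\ne i}\symP(i,j)$. Positive definiteness then follows if
\[
\exp(-r(i)/\lambda) > \sum_j \symP(i,j) = \tfrac12\bigl(1 + \textstyle\sum_j \mbf P(j,i)\bigr).
\]
This holds if $\mbf P$ is doubly stochastic, for example under a symmetric default policy on an undirected graph. Otherwise the paper's assumptions, or the cited result of \citet{tse2025rewardaware} that the symmetrized DR is well-defined with a positive principal eigenvector, would be invoked to guarantee $\mbf A \succ 0$. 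I would state this dependence explicitly.

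\emph{Conclusion.} With $\mu_i > 0$, we have $\mbf A\inv \mbf v_i = \mu_i\inv \mbf v_i$. The largest eigenvalue of $\mbf A\inv$ is $1/\mu_{\min}$, attained at the eigenvector $\mbf v_{\min}$ of the smallest $\mu_i$, because $\mu \mapsto 1/\mu$ is decreasing on $(0,\infty)$. Finally, $(\mbf I + \mbf A)\mbf v_i = (1+\mu_i)\mbf v_i$, so $\mbf I + \mbf A$ has the same eigenvectors and the same eigenvalue ordering. Its smallest eigenvector is therefore $\mbf v_{\min}$, the principal eigenvector of $(\mbf R - \symP)\inv$. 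If $\mu_{\min}$ has multiplicity greater than one, the same argument identifies the corresponding eigenspaces.
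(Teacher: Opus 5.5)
Your argument is the paper's argument: $\mbf A\inv$ has the eigenvectors of $\mbf A = \mbf R - \symP$ with reciprocal eigenvalues, and adding $\mbf I$ shifts the spectrum. You also correctly noticed what the paper's one-line proof passes over, namely that the order reversal $\mu \mapsto 1/\mu$ needs every eigenvalue of $\mbf A$ to be positive. Your conditional argument is correct. Your Gershgorin criterion, $\exp(-r(s)/\lambda) > \tfrac12\bigl(1 + \sum_{s'} \mbf P(s',s)\bigr)$ for all $s$, is a valid sufficient condition.

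The gap is the fallback. Invertibility of $\mbf A$ and a positive principal eigenvector of $\mbf A\inv$ do not imply $\mbf A \succ 0$: for example, $\begin{bmatrix} 0 & 1 \\ 1 & 0 \end{bmatrix}$ is invertible with positive principal eigenvector $(1,1)$, yet it is indefinite. Worse, under the paper's own formulation $\mbf A$ is actually indefinite. Because $s_T$ is absorbing with reward $-\delta$, for any $s$ with $p = \mbf P(s, s_T) > 0$ the principal submatrix of $\mbf A$ on $\{s, s_T\}$ is
\[
\begin{bmatrix} \exp(-r(s)/\lambda) - \mbf P(s,s) & -p/2 \\ -p/2 & \exp(\delta/\lambda) - 1 \end{bmatrix}.
\]
Its determinant is negative once $\delta$ is small; with the paper's $\delta = 0.001$, $\lambda = 20$, $p = 1/4$ it is roughly $5\cdot 10^{-5} - 1/64$. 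By interlacing, $\mu_{\min} < 0$. Your Gershgorin condition likewise fails at row $s_T$. So $1/\mu_{\min}$ is negative, and the step ``the largest eigenvalue of $\mbf A\inv$ is $1/\mu_{\min}$'' breaks.

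This is not a removable technicality. Take two states, with $s_T$ absorbing and the other state moving to $s_T$ with probability one, $\lambda = 1$, and $r = -1$ on the non-terminal state. As $\delta \to 0^+$, $\mbf A \to \begin{bmatrix} e & -1/2 \\ -1/2 & 0 \end{bmatrix}$, with eigenvalues approximately $2.81$ and $-0.09$. The smallest eigenvector of $\mbf I + \mbf A$ is the positive one, for $-0.09$. The eigenvector for the largest eigenvalue of $\mbf A\inv$, namely $1/2.81$, changes sign. So with ``principal'' meaning largest eigenvalue, as the paper defines it, the statement cannot be derived from the paper's hypotheses. Ordering by magnitude rescues this example but not the general argument, since for an indefinite $\mbf A$ the smallest algebraic and smallest-magnitude eigenvalues need not coincide.

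What your argument does prove unconditionally is narrower. The smallest eigenvector of $\mbf I + \mbf A$ is the eigenvector of $\mbf A$ with the smallest algebraic eigenvalue. This is the Perron vector of $c\mbf I - \mbf A$ for large $c$, i.e.\ the nonnegative vector the GDO targets. It coincides with the principal eigenvector of $\mbf A\inv$ under an explicit extra hypothesis such as $\mbf A \succ 0$ or your Gershgorin condition. You should state that hypothesis as an assumption of the proposition, not claim it follows from the setup or from the cited positivity result.
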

\begin{proof}
    Since a matrix and its inverse share the same eigenvectors in reverse order, the principal eigenvector of $(\mbf R - \symP)\inv$ is equal to the smallest eigenvector of $\mbf R - \symP$. Furthermore, adding $\mbf I$ to $\mbf R - \symP$ only shifts all eigenvalues by 1, and does not affect the set of eigenvectors or their order.
\end{proof}

By Proposition~\ref{prop:1}, for some constant $c$, we can retrieve $\mbf e$ by learning the smallest eigenvector of $\mbf I + \mbf R - \symP$ using GDO: \looseness=-1
\begin{align}
    \min_{\mbf u \in \mathbb{R}^{|\mathcal{S}|}} \mbf u^\top (\mbf I + \mbf R - \symP) \mbf u \quad \text{s.t.} \quad \mbf u^\top \mbf u = c. \label{eq:dr_gdo}
\end{align}
By inspecting Eq.~\ref{eq:dr_gdo}, we gain intuition on what the principal eigenvector of the DR is in fact capturing. Note that $\mbf u^\top (\mbf I + \mbf R - \symP) \mbf u = \mbf u^\top (\mbf I - \symP)\mbf u + \mbf u^\top \mbf R\mbf u$, where the first term is just the objective for learning an eigenvector of the Laplacian (see Eq.~\ref{eq:gdo}), and the second term $\mbf u^\top \mbf R \mbf u = \sum_s \exp(-r(s) / \lambda) u(s)^2$ can be viewed as a regularization term. Recall that $r(s)$ is negative. If a state $s$ has a very negative reward, $\exp(-r(s) / \lambda)$ is large, and $u(s)$ is driven to have a small magnitude. The principal eigenvector of the DR can then be viewed as performing reward-aware regularization on top of an eigenvector of the Laplacian, where low-reward states are driven to have small magnitudes. Our empirical observations match this view.

As in common in prior work~\citep{wulaplacian, wang2021towards, gomez2024proper}, the constraint $\mbf u^\top \mbf u = c$ can be turned into a quadratic penalty term, giving
\begin{align}
    \min_{\mbf u \in \mathbb{R}^{|\mathcal{S}| }} \mbf u^\top (\mbf I + \mbf R - \symP) \mbf u + b(\mbf u^\top \mbf u - c)^2, \label{eq:dr_gdo_quad}
\end{align}
where $b$ is a hyperparameter. We denote the above objective function by $L_\text{GDO}(\mbf u)$.

We now derive the loss function for minimizing Eq.~\ref{eq:dr_gdo_quad}. 
\begin{proposition}
\label{prop:gdo_summation}
    $L_\text{GDO}(\mbf u)$ can be written as
    \begin{align}
    & L_\text{GDO}(\mbf u) = \frac{1}{2}\sum_{s}\sum_{s'}\symP(s, s') \big(u(s) - u(s') \big)^2 \nonumber
    \\ & + \sum_s \exp(-r(s) / \lambda) u(s)^2 + b \left ( \sum_{s}u(s)^2 - c
    \right )^2. \label{eq:dr_gdo_quad_summation}
\end{align}
\end{proposition}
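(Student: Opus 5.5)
Expanding the quadratic form $\mbf u^\top(\mbf I + \mbf R - \symP)\mbf u$ term by term against the right-hand side of Eq.~\ref{eq:dr_gdo_quad_summation} should give the result. The penalty term needs no work: $\mbf u^\top \mbf u = \sum_s u(s)^2$, so $b(\mbf u^\top\mbf u - c)^2$ is already the last term. Since $\mbf R = \diag(\exp(-\mbf r/\lambda))$ is diagonal, $\mbf u^\top \mbf R \mbf u = \sum_s \exp(-r(s)/\lambda)u(s)^2$, which is the middle term. What remains is to show
\begin{align*}
\mbf u^\top(\mbf I - \symP)\mbf u = \frac{1}{2}\sum_s\sum_{s'}\symP(s,s')\big(u(s)-u(s')\big)^2 .
\end{align*}
This is the standard Laplacian identity, and the only step needing care.

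To prove it, I would expand the right-hand side as
\begin{align*}
\frac12\sum_{s,s'}\symP(s,s')u(s)^2 + \frac12\sum_{s,s'}\symP(s,s')u(s')^2 - \sum_{s,s'}\symP(s,s')u(s)u(s').
\end{align*}
The cross term is $\mbf u^\top\symP\mbf u$. For the first two terms I need the row and column sums of $\symP$. Here $\mbf P$ is row-stochastic, so $\sum_{s'}\mbf P(s,s') = 1$. Its column sums need not be $1$, and the terminal states are absorbing, so I must check that this still works out. Summing over both indices symmetrizes: $\sum_{s,s'}\symP(s,s')u(s)^2 = \tfrac12\sum_s u(s)^2\big(\sum_{s'}\mbf P(s,s') + \sum_{s'}\mbf P(s',s)\big)$, and the second term gives the same quantity by symmetry of $\symP$. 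The two squared terms therefore add up to $\sum_s d(s)u(s)^2$, where $d(s) = \sum_{s'}\symP(s,s')$ is the row sum of $\symP$.

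This is the main obstacle. The identity with $\mbf I$ holds exactly only when $d(s)=1$ for every $s$, that is, when $\symP$ is doubly stochastic; otherwise it holds with $\mathbf D = \diag(d)$ in place of $\mbf I$. I would resolve this in one of two ways. The first is to note the implicit assumption, as in the Laplacian literature the paper cites, that $\symP$ has unit row sums, for example when $\mbf P$ is itself symmetric, as with a uniform default policy under reversible grid dynamics. Then $\sum_s d(s)u(s)^2 = \mbf u^\top\mbf u$ and the identity is exact. The second is to state that the first term equals $\mbf u^\top(\mathbf D - \symP)\mbf u$. Under the doubly-stochastic reading, the three terms combine to $\mbf u^\top(\mbf I + \mbf R - \symP)\mbf u + b(\mbf u^\top\mbf u - c)^2 = L_\text{GDO}(\mbf u)$, which proves the proposition. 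I would state that assumption explicitly in the proof.
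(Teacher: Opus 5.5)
Your decomposition is the paper's: $\mbf u^\top\mbf R\mbf u$ and the penalty are rewritten as sums, and everything reduces to the Laplacian identity, which the paper cites from Koren without proof. Your expansion of that identity is correct. The caveat you raise is a real flaw in the paper's argument, not in yours. The identity holds for $\mathbf D - \symP$, and it equals $\mbf I - \symP$ only when $\symP$ has unit row sums.

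You should sharpen the caveat, though, because your first resolution is not available in this paper's setting. The paper assumes terminal states are absorbing. For any terminal $s_T$ that some other state transitions into, $d(s_T) = 1 + \tfrac12\sum_{s\neq s_T}\mbf P(s,s_T) > 1$. So $\mbf P$ is neither doubly stochastic nor symmetric, even under a uniform default policy on a grid. A two-state example shows the failure: take $\mbf P = \begin{bmatrix} 0 & 1\\ 0 & 1\end{bmatrix}$ and $\mbf u = (1,0)^\top$. Then $\mbf u^\top(\mbf I - \symP)\mbf u = 1$, while $\tfrac12\sum_{s,s'}\symP(s,s')(u(s)-u(s'))^2 = \tfrac12$.

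In the paper's own setting, then, the proposition as stated is false. What your computation proves is the corrected version with $\mathbf D = \diag(d)$ in place of $\mbf I$: the right-hand side equals $L_\text{GDO}(\mbf u) - \mbf u^\top(\mbf I - \mathbf D)\mbf u$. Your proof as written only covers the doubly stochastic case, which excludes the paper's MDPs.

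The $\mathbf D$ version is also the one that matches how the paper uses the proposition later. Because $(u(s)-u(s'))^2$ is symmetric in $(s,s')$, sampling $s'\sim\mbf P(s,\cdot)$ in $\ell_\text{GDO}$ is an unbiased estimate of the double sum with $\symP$. That loss therefore targets $\mbf u^\top(\mathbf D + \mbf R - \symP)\mbf u$ plus the penalty, not the $\mbf I$ version.
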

\begin{proof}
    \begin{align}
        L_\text{GDO}(\mbf u) 
        &=\mbf u^\top (\mbf I - \symP) \mbf u + \mbf u^\top \mbf R \mbf u + b(\mbf u^\top \mbf u - c)^2.
    \end{align}
    It is known that the first term can be written as~\citep{koren2003spectral}
    \begin{align}
        \mbf u^\top (\mbf I - \symP) \mbf u = \frac{1}{2}\sum_{s}\sum_{s'}\symP(s, s') \big(u(s) - u(s') \big)^2.
    \end{align}
    Finally, we rewrite $\mbf u^\top \mbf R \mbf u$ and $\mbf u^\top \mbf u$ as summations.
\end{proof}

Let $u_{\boldsymbol \theta}$ be the function approximated by a neural network parameterized by ${\boldsymbol \theta}$, and $\mbf u_{\boldsymbol \theta} = [u_{\boldsymbol \theta}(s_1), \dots,u_{\boldsymbol \theta}(s_{|\mathcal{S}|})]^\top$ be the vector predicted by the neural network. We have:
\begin{proposition} \label{prop:gdo_loss}
Given a transition $(s, a, r, s')$ and an extra state $s''$, where $s$ and $s''$ are sampled independently and uniformly at random, and $a$ is sampled under the default policy, $\pi_d$, we can minimize $ L_\text{GDO}(\mbf u_{\boldsymbol{\theta}})$ by minimizing 
\begin{align}
    & \ell_{\text{GDO}}({\boldsymbol \theta}) =  \frac{1}{2} \big( u_{\boldsymbol \theta}(s) - u_{\boldsymbol \theta}(s') \big )^2 + \exp(-r / \lambda) u_{\boldsymbol \theta}(s)^2  \nonumber
    \\ & \ \ \ \ \ \ \ \ \ \ \ \ \ \ \ \ \ \ \ \ \ \ + b' (u_{\boldsymbol \theta}(s)^2 - c') (u_{\boldsymbol \theta}(s'')^2 - c'), \label{eq:dr_gdo_quad_loss}
\end{align}
    where $b'$ and $c'$ can be treated as hyperparameters.
\end{proposition}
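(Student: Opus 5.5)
The plan is to show that the expectation of $\ell_{\text{GDO}}(\boldsymbol\theta)$ under the sampling scheme equals $L_\text{GDO}(\mathbf u_{\boldsymbol\theta})$ up to a positive multiplicative constant and an additive constant independent of $\boldsymbol\theta$, once $b'$ and $c'$ are set appropriately. Then the two losses share minimizers, and $\ell_{\text{GDO}}$ gives an unbiased stochastic gradient of a rescaled $L_\text{GDO}$. I will start from the summation form of Proposition~\ref{prop:gdo_summation} and treat its three terms separately.

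\emph{Smoothness term.} Since $s$ is uniform and $s'\sim \mathbf P(s,\cdot)$ under $\pi_d$,
\[
\mathbb E\big[\tfrac12(u(s)-u(s'))^2\big]=\tfrac{1}{|\mathcal S|}\cdot\tfrac12\sum_{s}\sum_{s'}\mathbf P(s,s')(u(s)-u(s'))^2 .
\]
The summand is symmetric in $(s,s')$, so replacing $\mathbf P$ by $\tilde{\mathbf P}=(\mathbf P+\mathbf P^\top)/2$ leaves the double sum unchanged. This recovers $\frac{1}{|\mathcal S|}$ times the first term of Eq.~\ref{eq:dr_gdo_quad_summation}.

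\emph{Reward term.} The reward observed in the transition is $r=r(s)$ because rewards are state-based. Hence
\[
\mathbb E[\exp(-r/\lambda)u(s)^2]=\tfrac1{|\mathcal S|}\sum_s \exp(-r(s)/\lambda)u(s)^2 ,
\]
which carries the same factor $\frac1{|\mathcal S|}$.

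\emph{Penalty term.} This is the step needing care, because a squared expectation cannot be estimated unbiasedly from one sample. I will use the independent extra state $s''$:
\[
\mathbb E[(u(s)^2-c')(u(s'')^2-c')]=\big(\mathbb E[u(s)^2]-c'\big)^2=\tfrac{1}{|\mathcal S|^2}\Big(\sum_s u(s)^2-|\mathcal S|c'\Big)^2 .
\]
Choosing $c'=c/|\mathcal S|$ and $b'=b|\mathcal S|$ turns this into $\frac{1}{|\mathcal S|}\,b\big(\sum_s u(s)^2-c\big)^2$.

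Summing the three pieces gives $\mathbb E[\ell_{\text{GDO}}(\boldsymbol\theta)]=\frac1{|\mathcal S|}L_\text{GDO}(\mathbf u_{\boldsymbol\theta})$. Since $|\mathcal S|$ is unknown in general, $b'$ and $c'$ are treated as hyperparameters absorbing these factors. The gradient passes through the finite expectation, so stochastic gradients of $\ell_{\text{GDO}}$ are unbiased for $\nabla L_\text{GDO}/|\mathcal S|$.

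The main obstacle is the penalty term, where the decorrelated product with $s''$ is essential. A minor point is the symmetrization, which holds only because the squared difference is symmetric in its arguments.
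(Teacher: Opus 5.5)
Your proposal is correct and follows essentially the same route as the paper's proof. You rewrite each term of Eq.~\ref{eq:dr_gdo_quad_summation} as an expectation under the uniform state distribution, absorb the factors of $|\mathcal S|$ into $b'=b|\mathcal S|$ and $c'=c/|\mathcal S|$, and estimate the squared expectation in the penalty with the independent state $s''$. Your symmetry argument is a small sharpening over the paper: with $s$ uniform, it shows that sampling $s'\sim\mbf P(s,\cdot)$ is exactly unbiased for the $\symP$-weighted smoothness term, whereas the paper only calls the replacement of $\symP$ by $\mbf P$ an implicit approximation.
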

{\it Proof sketch.} We express Eq.~\ref{eq:dr_gdo_quad_summation} in terms of expectations. $\ell_\text{GDO}$ is simply a sample-based approximation of the expectations. See Appendix~\ref{appendix:proof_of_prop_gdo_loss} for full proof. 

We can compute $\ell_\text{GDO}$ using sampled transitions, then perform backpropagation and gradient descent to update ${\boldsymbol \theta}$. However, recall that some entries of $\mbf e$ can have very small magnitudes. By optimizing Eq.~\ref{eq:dr_gdo_quad_loss}, we observe that even small approximation errors can cause the neural network's prediction, $u_{\boldsymbol \theta}(s)$, to be negative for some low-reward states. This prohibits us from taking the logarithm and using the learned eigenvector in practice.
While increasing $\lambda$ can reduce the strength of the reward-aware regularization, extremely large $\lambda$s are required to fully address this problem. However, this comes at the expense of the reward-awareness of $\mbf e$, as $\mbf R - \symP$ approaches the Laplacian, $\mbf I - \symP$, as $\lambda \to \infty$.

\subsection{Log-Space Parameterization}
\label{sec:log_param}
To guarantee that the learned eigenvector is always positive, while ensuring that it retains certain levels of reward-awareness, we perform log-space parameterization. Specifically, we directly approximate $\log \mbf e$ using a neural network. 
First, Eq.~\ref{eq:dr_gdo_quad} can be adapted as
\begin{align}
    \min_{\mbf v \in \mathbb{R}^{|\mathcal{S}|}} \exp(\mbf v)^\top (\mbf I + \mbf R - \symP) \exp(\mbf v) + \nonumber  \\ 
    b(\exp(\mbf v)^\top \exp(\mbf v) - c)^2 , \label{eq:gdo_quad_log}
\end{align}
where $\exp$ here denotes element-wise exponentiation. 
We denote the above objective function by $ L^{\log}_\text{GDO}(\mbf v)$. Note that $ L^
{\log}_\text{GDO}(\mbf v) = L_\text{GDO}(\exp(\mbf v))$.

Let $v_{\boldsymbol \phi}$ be the function approximated by a neural network parameterized by ${\boldsymbol \phi}$. We define $\mbf v_{\boldsymbol{\phi}}$ similar to how we defined $\mbf u_{\boldsymbol{\theta}}$, and define $\mbf u_{\boldsymbol \phi} = \exp(\mbf v_{\boldsymbol \phi})$ to be the element-wise exponentiation of $\mbf v_{\boldsymbol{\phi}}$.
Building on Propositions~\ref{prop:gdo_summation} and~\ref{prop:gdo_loss}: 
\begin{proposition}
    Given a transition $(s, a, r, s')$ and an extra state $s''$, where $s$ and $s''$ are sampled independently and uniformly at random, and $a$ is sampled under the default policy, $\pi_d$, we can minimize $ L^{\log}_\text{GDO}(\mbf v_{\boldsymbol{\phi}})$ by minimizing
    \begin{align}
    \ell^{\log}_{\text{GDO}}({\boldsymbol \phi}) &=  \frac{1}{2} \big( \exp(v_{\boldsymbol \phi}(s)) - \exp(v_{\boldsymbol \phi}(s')) \big )^2  \nonumber
    \\ & \ \ \ \ \ \ \ + b (\exp(2 v_{\boldsymbol \phi}(s)) - c) (\exp(2v_{\boldsymbol \phi}(s'')) - c) \nonumber
    \\ & \ \ \ \ \ \ \ + \exp(2v_{\boldsymbol \phi}(s) -r / \lambda).  \label{eq:dr_gdo_quad_log_loss}
\end{align}
\end{proposition}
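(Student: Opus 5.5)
The plan is to reduce the statement to Propositions~\ref{prop:gdo_summation} and~\ref{prop:gdo_loss}, using the identity $L^{\log}_\text{GDO}(\mbf v) = L_\text{GDO}(\exp(\mbf v))$. Substituting $u(s) = \exp(v(s))$ into Eq.~\ref{eq:dr_gdo_quad_summation} gives
\begin{align*}
L^{\log}_\text{GDO}(\mbf v) &= \frac{1}{2}\sum_{s}\sum_{s'}\symP(s,s')\big(e^{v(s)} - e^{v(s')}\big)^2 \\
&\placeholder + \sum_s e^{2v(s) - r(s)/\lambda} + b\Big(\sum_s e^{2v(s)} - c\Big)^2 .
\end{align*}
The middle term is already in the form of the last term of Eq.~\ref{eq:dr_gdo_quad_log_loss}, since $\exp(-r(s)/\lambda)\exp(2v(s)) = \exp(2v(s) - r(s)/\lambda)$.

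Next I rewrite each sum as an expectation under the sampling scheme, as in the proof of Proposition~\ref{prop:gdo_loss}. Take $s \sim \mathrm{Unif}(\mathcal{S})$, $a\sim\pi_d(\cdot\mid s)$, $s'\sim p(\cdot\mid s,a)$, and an independent $s''\sim\mathrm{Unif}(\mathcal{S})$.

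\textbf{Smoothness term.} Because $(e^{v(s)}-e^{v(s')})^2$ is symmetric in $(s,s')$, replacing $\symP$ by $\mbf P$ leaves the double sum unchanged. The sum then equals $\frac{|\mathcal{S}|}{2}\,\E[(e^{v(s)}-e^{v(s')})^2]$.

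\textbf{Reward term.} The sum equals $|\mathcal{S}|\,\E[e^{2v(s)-r/\lambda}]$, using that the sampled reward is $r(s)$.

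\textbf{Penalty term.} Write
\[
\Big(\sum_s e^{2v(s)} - c\Big)^2 = |\mathcal{S}|^2\Big(\E[e^{2v(s)}] - c/|\mathcal{S}|\Big)^2 .
\]
Independence of $s$ and $s''$ then gives
\[
|\mathcal{S}|^2\,\E\big[(e^{2v(s)}-\tilde c)(e^{2v(s'')}-\tilde c)\big], \qquad \tilde c = c/|\mathcal{S}|.
\]
This yields an unbiased estimator of the squared expectation, whose gradient is also unbiased.

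Dividing the whole objective by $|\mathcal{S}|$ does not change the minimizer. After this rescaling, $b$ becomes $b|\mathcal{S}|$ and $c$ becomes $\tilde c$. As in Proposition~\ref{prop:gdo_loss}, I absorb these constants into the hyperparameters $b$ and $c$, which is why the statement can reuse the symbols $b, c$. The result is
\[
L^{\log}_\text{GDO}(\mbf v_{\boldsymbol\phi}) \propto \E\big[\ell^{\log}_\text{GDO}(\boldsymbol\phi)\big],
\]
so $\ell^{\log}_\text{GDO}$ is a single-sample estimator whose gradient with respect to $\boldsymbol\phi$ is unbiased for the gradient of the objective, up to a positive scale. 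Stochastic gradient descent on $\ell^{\log}_\text{GDO}$ therefore minimizes $L^{\log}_\text{GDO}(\mbf v_{\boldsymbol\phi})$.

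The main obstacle is the penalty term, since an expectation squared cannot be estimated from a single sample of $s$. The independent extra state $s''$ resolves this. The remaining care is bookkeeping: the symmetrization argument that justifies sampling $s'$ from the unsymmetrized $\mbf P$, and tracking the $|\mathcal{S}|$ factors into the reparameterized $b, c$. Otherwise the argument is a direct substitution of $\exp(v_{\boldsymbol\phi})$ for $u_{\boldsymbol\theta}$ in the proof of Proposition~\ref{prop:gdo_loss}, since that proof never used the specific parameterization of $u$.
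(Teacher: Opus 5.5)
Your proof is correct and is essentially the paper's argument. The paper simply substitutes $\exp(v_{\boldsymbol\phi}(s))$ for $u_{\boldsymbol\theta}(s)$ in the loss of Proposition~\ref{prop:gdo_loss}, while you redo that proposition's expectation-and-sampling derivation after the same substitution, including the same absorption of the $|\mathcal{S}|$ factors into $b, c$ and a correct observation that sampling $s'$ from $\mbf P$ instead of $\symP$ is exact for the symmetric smoothness term (the paper only calls this an implicit approximation).
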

\begin{proof}
    We replace $u_{\boldsymbol \theta}(s)$ with $\exp(v_{\boldsymbol \phi}(s))$ in Eq.~\ref{eq:dr_gdo_quad_loss}.
\end{proof}

While it is possible to just approximate ${\ell}^{\log}_\text{GDO}$ and perform backpropagation and gradient descent to update ${\boldsymbol \phi}$, we perform further analysis on the gradient of $L^{\log}_\text{GDO}(\mbf v_{\boldsymbol{\phi}})$. By comparing the gradient of $ L_\text{GDO}(\mbf u_{\boldsymbol{\theta}})$ and $ L^{\log}_\text{GDO}(\mbf v_{\boldsymbol{\phi}})$, we reveal a source of instability for the loss function, $\ell^{\log}_\text{GDO}$.

Using the multivariate chain rule, the gradient of $ L_\text{GDO}(\mbf u_{\boldsymbol{\theta}})$ with respect to the $i$-th entry of the parameters, $\boldsymbol \theta$, is
\begin{align}
    \frac{\partial L_\text{GDO}(\mbf u_{\boldsymbol{\theta}})}{\partial \theta_{i}} =\frac{\partial \mbf u_{\boldsymbol \theta}}{\partial \theta_{i}} \cdot \frac{\partial L_\text{GDO}(\mbf u_{\boldsymbol{\theta}})}{\partial \mbf u_{\boldsymbol \theta}}, \label{eq:dr_gdo_quad_loss_grad}
\end{align}
while the gradient of $ L^{\log}_\text{GDO}(\mbf v_{\boldsymbol{\phi}})$ w.r.t. the $i$-th entry of $\boldsymbol{\phi}$ is

\begin{align}
    \frac{\partial  L^{\log}_\text{GDO}(\mbf v_{\boldsymbol{\phi}})}{\partial \phi_{i}} &=\frac{\partial \mbf v_{\boldsymbol \phi}}{\partial \phi_{i}} \cdot \frac{\partial  L^{\log}_\text{GDO}(\mbf v_{\boldsymbol{\phi}})}{\partial \mbf v_{\boldsymbol \phi}} \\
    &= \frac{\partial \mbf v_{\boldsymbol \phi}}{\partial \phi_{i}} \cdot 
    \left ( \exp(\mbf v_{\boldsymbol \phi}) \odot
    \frac{\partial  L^{\log}_\text{GDO}(\mbf v_{\boldsymbol{\phi}})}{\partial \mbf u_{\boldsymbol \phi}} 
    \right ) \\
    &= \frac{\partial \mbf v_{\boldsymbol \phi}}{\partial \phi_{i}} \cdot 
    \left ( \exp(\mbf v_{\boldsymbol \phi}) \odot
    \frac{\partial L_\text{GDO}(\mbf u_{\boldsymbol{\phi}})}{\partial \mbf u_{\boldsymbol \phi}} 
    \right ) \label{eq:dr_gdo_quad_log_loss_grad}
\end{align}
where $\odot$ is the Hadamard product. Comparing Eq.~\ref{eq:dr_gdo_quad_loss_grad} and~\ref{eq:dr_gdo_quad_log_loss_grad}, we see that the log-space parameterization introduces an extra multiplicative exponential term $\exp(\mbf v_{\boldsymbol \phi})$, which scales exponentially with the network output, and can lead to instability in the optimization process. Next, we apply natural gradient to account for this exponential term. \looseness=-1

\subsection{Natural Gradient}
\label{sec:natural_grad}

When performing log-space parameterization, we essentially move from solving $\min_{\mbf u} L(\mbf u)$ to solving $\min_{\mbf v}  L(\exp(\mbf v))$, and now we have $\mbf v$ as parameters of the optimization problem instead of $\mbf u$. When performing standard gradient descent, we are implicitly assuming that the difference between the original and updated parameters is measured by their Euclidean distance, or equivalently, the size of the gradient step is measured by its Euclidean norm.
While this assumption generally works without issues, assuming the Euclidean distance can be undesirable when we perform log-space parameterization.
Suppose we take a gradient step with a tiny Euclidean norm.
When $\mbf v$ has large values, the gradient step causes massive changes in $\exp(\mbf v)$, and thus, $L(\exp(\mbf v))$. When $\mbf v$ has small values, the same gradient step causes negligible changes. 
This is the reason we observe the exponential term, $\exp(\mbf v_{\boldsymbol{\phi}})$, in Eq.~\ref{eq:dr_gdo_quad_log_loss_grad}. To summarize, the $\mbf v$-space's local geometry is inherently different from that of the $\mbf u$-space. \looseness=-1

We can address this problem with natural gradient, by explicitly accounting for the difference in the geometry of the $\mbf v$-space. To do so, we need to identify the distance in the $\mbf v$-space induced by: (1) the relationship between the $\mbf v$- and $\mbf u$-spaces, and (2) the Euclidean assumption on the $\mbf u$-space (see Figure~\ref{fig:nat_grad}, in the Appendix, for a visual illustration). 
\begin{proposition}
    The distance in the $\mbf v$-space is characterized by the Riemannian metric tensor $\mbf G(\mbf v) = \diag(\exp(2\mbf v))$. \looseness=-1 \label{prop:riemannian}
\end{proposition}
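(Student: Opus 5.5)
The plan is to obtain the metric on the $\mbf v$-space as the pullback of the Euclidean metric on the $\mbf u$-space under the map $\mbf u = \exp(\mbf v)$. In other words, the length of a small step $\delta\mbf v$ is defined to be the Euclidean length of the step it induces in $\mbf u$. This is exactly the notion of distance set up in the Natural Gradient preliminaries: $\mbf G(\mbf v)$ is the matrix for which $\delta\mbf v^\top \mbf G(\mbf v)\,\delta\mbf v$ equals the squared length of the induced change in the $\mbf u$-space.

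\textbf{Step 1: linearize the reparameterization.} Write $\mbf u(\mbf v) = \exp(\mbf v)$ entrywise and compute its Jacobian $\mbf J(\mbf v) = \partial \mbf u/\partial \mbf v$. Because the map acts entrywise, $\partial u(i)/\partial v(j) = \exp(v(i))\,\mathbb{1}[i=j]$, so $\mbf J(\mbf v) = \diag(\exp(\mbf v))$. This is the same factor that appears as the Hadamard product $\exp(\mbf v_{\boldsymbol\phi})\odot\cdot$ in Eq.~\ref{eq:dr_gdo_quad_log_loss_grad}. To first order,
\begin{align}
\delta\mbf u = \exp(\mbf v+\delta\mbf v)-\exp(\mbf v) = \mbf J(\mbf v)\,\delta\mbf v + O(\|\delta\mbf v\|^2).
\end{align}

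\textbf{Step 2: pull back the Euclidean norm.} Since the $\mbf u$-space is assumed Euclidean,
\begin{align}
\|\delta\mbf u\|_2^2 \approx \delta\mbf v^\top \mbf J(\mbf v)^\top \mbf J(\mbf v)\,\delta\mbf v .
\end{align}
Matching this with the defining form $\delta\mbf v^\top \mbf G(\mbf v)\delta\mbf v$ gives $\mbf G(\mbf v) = \mbf J^\top\mbf J = \diag(\exp(\mbf v))^2 = \diag(\exp(2\mbf v))$. The last equality holds because the product of diagonal matrices is diagonal with entries $\exp(v(i))^2 = \exp(2v(i))$.

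\textbf{Step 3: check the result is a valid metric.} $\mbf G(\mbf v)$ is symmetric, and it is positive definite because every diagonal entry $\exp(2v(i))$ is strictly positive. It therefore qualifies as a Riemannian metric tensor, and the natural gradient direction $-\mbf G(\mbf v)^{-1}\nabla\ell$ is well defined.

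The main obstacle is not the algebra but justifying that the metric should be defined by the first-order, infinitesimal correspondence rather than by an exact finite distance. The exact difference $\|\exp(\mbf v+\delta\mbf v)-\exp(\mbf v)\|$ is not a quadratic form in $\delta\mbf v$. I would resolve this by noting that the natural-gradient formulation in the preliminaries only uses the local quadratic form with $\epsilon$ small, so the second-order remainder is negligible in the limit. Equivalently, $\mbf G$ is the standard pullback metric $\mbf J^\top \mbf I \mbf J$ of the identity metric on the $\mbf u$-space.
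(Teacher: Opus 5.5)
Your proposal is correct and takes essentially the same route as the paper. The paper sets $d_\mathcal{U}^2(\mbf u,\mbf u+\delta\mbf u)=\delta\mbf v^\top\mbf G(\mbf v)\,\delta\mbf v$ and Taylor-expands $\exp(\delta v(s))-1\approx\delta v(s)$ coordinate by coordinate, which is your pullback $\mbf J^\top\mbf J$ with $\mbf J=\diag(\exp(\mbf v))$ written entrywise; your positive-definiteness check is a small addition that the paper omits.
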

{\it Proof sketch.} Let $\delta \mbf u$ be the change induced in $\mbf u$ by $\delta \mbf v$, a small step in the $\mbf v$-space. We identify $\mbf G(\mbf v)$ by solving \looseness=-1
\begin{align}
    d_{\mathcal{U}}(\mbf u, \mbf u + \delta \mbf u)^2 = d_{\mathcal{V}}(\mbf v, \mbf v + \delta \mbf v)^2,
\end{align}
where $d_\mathcal{U}$ and $d_\mathcal{V}$ are distances in the $\mbf u$- and $\mbf v$-spaces respectively. We provide the full proof in Appendix~\ref{appendix:riemannian}.

\begin{corollary}
    Let $-\frac{\partial L}{\partial \mbf v}$ be the standard gradient descent direction. The adjusted natural gradient descent direction is simply $-\mbf G(\mbf v)\inv \frac{\partial L}{\partial \mbf v} = -\operatorname{diag}(\exp(-2\mbf v)) \frac{\partial L}{\partial \mbf v}$.
\end{corollary}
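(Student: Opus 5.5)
The corollary follows by combining the natural gradient rule from the preliminaries with the metric tensor of Proposition~\ref{prop:riemannian}, and then inverting a diagonal matrix. The plan has three steps.

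First, I will recall the preliminaries on natural gradient. For a parameter space with Riemannian metric tensor $\mbf G(\mbf w)$, the steepest descent direction under the constraint $\delta\mbf w^\top \mbf G(\mbf w)\delta\mbf w=\epsilon^2$ is proportional to $-\mbf G(\mbf w)\inv\nabla\ell(\mbf w)$. To keep the argument self-contained, I will re-derive this with a Lagrangian: minimize $\nabla\ell^\top\delta\mbf w+\mu(\delta\mbf w^\top\mbf G\delta\mbf w-\epsilon^2)$. Stationarity gives $\nabla\ell+2\mu\mbf G\delta\mbf w=0$, so $\delta\mbf w=-\frac{1}{2\mu}\mbf G\inv\nabla\ell$. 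Choosing the sign of $\mu$ so that the first-order change in loss is negative yields the stated direction. This step needs $\mbf G$ to be positive definite, and hence invertible.

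Second, I will apply this with $\mbf w=\mbf v$, $\ell=L$, and $\mbf G(\mbf v)=\diag(\exp(2\mbf v))$, taken from Proposition~\ref{prop:riemannian}. Every diagonal entry $\exp(2v(s))$ is strictly positive for any real $\mbf v$. So $\mbf G(\mbf v)$ is positive definite and invertible, which justifies the first step. Its inverse is the diagonal matrix of reciprocals, namely $\diag(\exp(-2\mbf v))$. Substituting gives $-\mbf G(\mbf v)\inv\frac{\partial L}{\partial\mbf v}=-\diag(\exp(-2\mbf v))\frac{\partial L}{\partial\mbf v}$.

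There is no real obstacle, since the only substantive input is Proposition~\ref{prop:riemannian}. As a consistency check, I will combine the result with Eq.~\ref{eq:dr_gdo_quad_log_loss_grad}, where $\frac{\partial L}{\partial \mbf v}=\exp(\mbf v)\odot\frac{\partial L}{\partial\mbf u}$. The natural direction then becomes $-\exp(-\mbf v)\odot\frac{\partial L}{\partial\mbf u}$. This shows that the exponential scaling identified in Section~\ref{sec:log_param} is reversed rather than amplified, which matches the motivation for introducing the natural gradient.
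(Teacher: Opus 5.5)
Your proposal is correct and follows the same route the paper implicitly takes: substitute the metric $\mbf G(\mbf v)=\diag(\exp(2\mbf v))$ from Proposition~\ref{prop:riemannian} into the natural-gradient rule $-\mbf G\inv\nabla\ell$ from the preliminaries, then invert the positive diagonal matrix. Your Lagrangian re-derivation and your consistency check against Eq.~\ref{eq:dr_quad_nat_grad} are extra detail that the paper leaves unstated.
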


We apply the natural gradient to minimize $ L^{\log}_\text{GDO}(\mbf v_{\boldsymbol{\phi}})$. Incorporating the Riemannian metric tensor to Eq.~\ref{eq:dr_gdo_quad_log_loss_grad}, the gradient of $ L^{\log}_\text{GDO}(\mbf v_{\boldsymbol{\phi}})$ w.r.t. the $i$-th entry of the parameters, $\boldsymbol{\phi}$, is \looseness=-1
\begin{align}
    & \placeholder \frac{\partial \mbf v_{\boldsymbol \phi}}{\partial \phi_i} \cdot \mbf G(\mbf v_{\boldsymbol \phi})\inv \left(
    \exp(\mbf v_{\boldsymbol \phi}) \odot  \frac{\partial  L_{\text{GDO}}(\mbf u_{\boldsymbol{\phi}})}{\partial \mbf u_{\boldsymbol \phi}}
    \right ) \\
    &= \frac{\partial \mbf v_{\boldsymbol \phi}}{\partial \phi_i} \cdot \left(
    \exp(- \mbf v_{\boldsymbol \phi}) \odot  \frac{\partial L_{\text{GDO}}(\mbf u_{\boldsymbol{\phi}})}{\partial \mbf u_{\boldsymbol \phi}}
    \right ).  \label{eq:dr_quad_nat_grad}
\end{align}

Note that Eq.~\ref{eq:dr_gdo_quad_log_loss_grad} and~\ref{eq:dr_quad_nat_grad} differ only by the sign inside the exponential. Now, $\exp(-\mbf v_{\boldsymbol{\phi}})$ ensures that updates to $\mbf v_{\boldsymbol{\phi}}$ produce similar effects on $ L^{\log}_\text{GDO}(\mbf v_{\boldsymbol{\phi}})$ regardless of the value $\mbf v_{\boldsymbol{\phi}}$ takes. For example, if $v_{\boldsymbol{\phi}}(s)$ is very large, for which even a tiny update can drastically change $ L^{\log}_\text{GDO}(\mbf v_{\boldsymbol{\phi}})$, $\exp(-\mbf v_{\boldsymbol{\phi}})$ scales the update to $v_{\boldsymbol{\phi}}(s)$ by $1 / \exp(v_{\boldsymbol{\phi}}(s))$. \looseness=-1

Using the above equation, we now derive a new loss function that incorporates the natural gradient. Note that Eq.~\ref{eq:dr_quad_nat_grad} is the dot product of two terms, where the first term is the gradient of the neural network's output w.r.t. its parameters, and the second term is the natural gradient of the objective function, $ L^{\log}_\text{GDO}(\mbf v_{\boldsymbol{\phi}})$, with respect to the neural network's output. We first present two propositions for the second term:
\begin{proposition} \label{prop:nat_grad_second_term}
    The natural gradient of $ L^{\log}_\text{GDO}(\mbf v_{\boldsymbol{\phi}})$ with respect to the output of the neural network parameterized by $\boldsymbol{\phi}$, $\mbf v_{\boldsymbol{\phi}}$, is proportional to
    \begin{align}
        \exp(-\mbf r / \lambda) -  \exp(-\mbf v_{\boldsymbol \phi}) \odot \symP \mbf u_{\boldsymbol \phi} + b'(\mbf u_{\boldsymbol \phi}^\top \mbf u_{\boldsymbol \phi} - c') \mbf 1. \label{eq:dr_gdo_quad_log_grad}
    \end{align}
Proof. \textnormal{See Appendix~\ref{appendix:nat_grad_second_term_proof}.}\hfill $\square$
\end{proposition}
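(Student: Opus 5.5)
We compute the natural gradient in three steps: differentiate $L_\text{GDO}$ with respect to $\mbf u$, apply the chain rule through $\mbf u_{\boldsymbol\phi} = \exp(\mbf v_{\boldsymbol\phi})$, and precondition with $\mbf G(\mbf v)\inv$ from Proposition~\ref{prop:riemannian}. By Eq.~\ref{eq:dr_quad_nat_grad}, the natural gradient with respect to $\mbf v_{\boldsymbol\phi}$ is
\begin{align*}
\exp(-\mbf v_{\boldsymbol\phi}) \odot \frac{\partial L_\text{GDO}(\mbf u_{\boldsymbol\phi})}{\partial \mbf u_{\boldsymbol\phi}}.
\end{align*}
So it remains to compute the Euclidean gradient of $L_\text{GDO}$ at $\mbf u_{\boldsymbol\phi}$ and then take the Hadamard product with $\exp(-\mbf v_{\boldsymbol\phi})$.

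\textbf{Step 1: the Euclidean gradient.} From Eq.~\ref{eq:dr_gdo_quad}, $\mbf I + \mbf R - \symP$ is symmetric, since $\symP$ is symmetrized and $\mbf R$ is diagonal. Hence
\begin{align*}
\frac{\partial L_\text{GDO}(\mbf u)}{\partial \mbf u} = 2(\mbf I + \mbf R - \symP)\mbf u + 4b(\mbf u^\top \mbf u - c)\mbf u .
\end{align*}

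\textbf{Step 2: the Hadamard product.} Multiplying entrywise by $\exp(-\mbf v_{\boldsymbol\phi})$ and using $\exp(-\mbf v_{\boldsymbol\phi}) \odot \mbf u_{\boldsymbol\phi} = \mbf 1$, each term simplifies:
\begin{itemize}
\item the $\mbf I\mbf u$ term gives $\mbf 1$;
\item the $\mbf R\mbf u$ term gives $\exp(-\mbf r/\lambda)$, because $\mbf R$ is diagonal, so $(\mbf R\mbf u)(s) = \exp(-r(s)/\lambda)\,u(s)$;
\item the $\symP\mbf u$ term stays as $\exp(-\mbf v_{\boldsymbol\phi}) \odot \symP\mbf u_{\boldsymbol\phi}$, since $\symP$ is not diagonal;
\item the penalty term gives $4b(\mbf u^\top\mbf u - c)\mbf 1$.
\end{itemize}
The result is
\begin{align*}
2\Big[\exp(-\mbf r/\lambda) - \exp(-\mbf v_{\boldsymbol\phi})\odot\symP\mbf u_{\boldsymbol\phi} + \mbf 1 + 2b(\mbf u_{\boldsymbol\phi}^\top\mbf u_{\boldsymbol\phi} - c)\mbf 1\Big].
\end{align*}

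\textbf{Step 3: absorbing constants.} We drop the overall factor $2$ under ``proportional to''. The two multiples of $\mbf 1$ combine into $\big(1 + 2b(\mbf u^\top\mbf u - c)\big)\mbf 1$. Setting $b' = 2b$ and $c' = c - 1/(2b)$ rewrites this as $b'(\mbf u^\top\mbf u - c')\mbf 1$, which is the stated expression. This reparameterization is consistent with Proposition~\ref{prop:gdo_loss}, where $b', c'$ are already treated as free hyperparameters.

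\textbf{Main obstacle.} The only delicate point is Step 3: the stray $\mbf 1$ coming from the $\mbf I$ shift must be absorbed into the penalty term. This works because $c$ is an arbitrary constant, but it requires $b \neq 0$. The caveat should be stated explicitly so that the phrase ``proportional to'' is justified.
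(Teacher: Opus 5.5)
Your proof is correct and follows essentially the same route as the paper. Both differentiate $L_\text{GDO}$ with respect to $\mbf u$, absorb the $\mbf I\mbf u_{\boldsymbol\phi}$ term and the factor $2b$ into redefined constants $b', c'$, and take the Hadamard product with $\exp(-\mbf v_{\boldsymbol\phi})$. The paper absorbs the constants before the Hadamard product and you do it after, which makes no difference. Your explicit choice $b' = 2b$, $c' = c - 1/(2b)$, which requires $b \neq 0$, makes precise the constant-absorption step that the paper states only loosely.
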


\vspace{0.2cm}

\begin{proposition}
    \label{prop:s_th_entry_estimate}
    The $s$-th entry of Eq.~\ref{eq:dr_gdo_quad_log_grad} is estimated by \looseness=-1
    \begin{align}
    \exp(-r/\lambda) - \exp(v_{\boldsymbol \phi}(s') - v_{\boldsymbol \phi}(s)) \nonumber  \\ + b''(\exp(2v_{\boldsymbol \phi}(s'')) - c''),
\end{align}
where $a$ in $(s, a, r, s')$ is sampled from the default policy, $\pi_d$, and $s''$ is a state sampled uniformly at random.

Proof. \textnormal{See Appendix~\ref{appendix:s_th_entry_proof}.}\hfill $\square$
\end{proposition}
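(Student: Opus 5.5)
The plan is to estimate each of the three summands of Eq.~\ref{eq:dr_gdo_quad_log_grad} separately, at a fixed coordinate $s$, and then add the estimates. Since Proposition~\ref{prop:nat_grad_second_term} only fixes the natural gradient up to proportionality, a single positive rescaling is harmless, and it can also absorb hyperparameter constants. Throughout, the estimate is unbiased conditional on $s$, with randomness coming from $a\sim\pi_d(\cdot\mid s)$, $s'\sim p(\cdot\mid s,a)$ and an independent $s''$.

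\textbf{First term.} The $s$-th entry of $\exp(-\mbf r/\lambda)$ is $\exp(-r(s)/\lambda)$. In an LS-MDP the reward depends only on the state, so the observed $r$ in the transition equals $r(s)$. This term therefore needs no estimation.

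\textbf{Second term.} Its $s$-th entry is
\[
\exp(-v_{\boldsymbol\phi}(s))\sum_{s'}\symP(s,s')\exp(v_{\boldsymbol\phi}(s')) .
\]
I would rewrite $\symP(s,s')=\tfrac12(\mbf P(s,s')+\mbf P(s',s))$ and interpret the sum as an expectation over a transition out of $s$. For the $\mbf P(s,s')$ half this is exact: $\mathbb{E}_{s'\sim \mbf P(s,\cdot)}[\exp(v_{\boldsymbol\phi}(s')-v_{\boldsymbol\phi}(s))]$, so the single-sample estimate $\exp(v_{\boldsymbol\phi}(s')-v_{\boldsymbol\phi}(s))$ is unbiased for it.

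\textbf{Main obstacle: the $\mbf P^\top$ half.} The transposed part $\sum_{s'}\mbf P(s',s)\,u(s')$ is not naturally an expectation over next states of $s$. I would handle it in the same way as the paper does in Proposition~\ref{prop:gdo_loss} (Appendix~\ref{appendix:proof_of_prop_gdo_loss}). The idea is that, with $s$ sampled uniformly, the symmetrized quadratic form can be estimated from ordinary transitions, because sampling $(s,s')$ pairs uniformly over $s$ weights $\mbf P$ and $\mbf P^\top$ symmetrically. Concretely, $\sum_s\sum_{s'}\symP(s,s')f(s,s')$ equals $\sum_s\sum_{s'}\mbf P(s,s')f(s,s')$ whenever $f$ is symmetric. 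The gradient is obtained by differentiating such a sum, and the transposed contribution comes from differentiating with respect to the second argument. Attributing that contribution to the sampled $s$ (i.e.\ swapping roles) yields the same functional form $\exp(v_{\boldsymbol\phi}(s')-v_{\boldsymbol\phi}(s))$. So I would argue that this estimator, used in the loss as a semi-gradient with the second argument treated as the sampled neighbour, gives the correct expected update. Any residual constant factor (e.g.\ the $\tfrac12$) is absorbed into the proportionality.

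\textbf{Third term.} The scalar $b'(\mbf u_{\boldsymbol\phi}^\top\mbf u_{\boldsymbol\phi}-c')$ is the same for every entry. With $s''$ uniform,
\[
\mbf u_{\boldsymbol\phi}^\top\mbf u_{\boldsymbol\phi}=|\mathcal S|\,\mathbb{E}_{s''}[\exp(2v_{\boldsymbol\phi}(s''))],
\]
so $b'(\mbf u_{\boldsymbol\phi}^\top\mbf u_{\boldsymbol\phi}-c')=b'|\mathcal S|\,\mathbb{E}[\exp(2v_{\boldsymbol\phi}(s''))-c'/|\mathcal S|]$. Setting $b''=b'|\mathcal S|$ and $c''=c'/|\mathcal S|$ gives the stated term, with $b'',c''$ treated as hyperparameters as in Proposition~\ref{prop:gdo_loss}. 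Since $s''$ is independent of $(s,s')$, this remains unbiased.

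Summing the three estimates gives the claimed expression.
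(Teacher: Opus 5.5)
\textbf{Gap in the $\mbf P^\top$ half of the second term.} Your first and third terms match the paper's proof exactly: the observed $r$ equals $r(s)$, and $\mbf u_{\boldsymbol\phi}^\top\mbf u_{\boldsymbol\phi}=|\mathcal{S}|\,\E[\exp(2v_{\boldsymbol\phi}(s''))]$ with $b''=b'|\mathcal{S}|$ and $c''=c'/|\mathcal{S}|$. Your argument for the $\mbf P^\top$ half, however, fails.

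The identity $\sum_{s,s'}\symP(s,s')f(s,s')=\sum_{s,s'}\mbf P(s,s')f(s,s')$ requires $f$ to be symmetric. The per-sample update your estimator produces, $\frac{\partial v_{\boldsymbol\phi}(s)}{\partial\phi_i}\exp(v_{\boldsymbol\phi}(s')-v_{\boldsymbol\phi}(s))$, is not symmetric.

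Differentiate the sampled term $u_{\boldsymbol\phi}(s)u_{\boldsymbol\phi}(s')$ and apply $\mbf G^{-1}$. The transposed contribution is then $\frac{\partial v_{\boldsymbol\phi}(s')}{\partial\phi_i}\exp(v_{\boldsymbol\phi}(s)-v_{\boldsymbol\phi}(s'))$: the gradient flows through the successor, with the ratio inverted. Relabelling $s\leftrightarrow s'$ puts it in your form, but with weight $\mbf P(s',s)$ instead of $\mbf P(s,s')$. So the two halves coincide only when $\mbf P$ is symmetric, and then $\symP=\mbf P$ and there was no obstacle to begin with.

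In general, conditional on $s$,
\[
\E\bigl[\exp(v_{\boldsymbol\phi}(s')-v_{\boldsymbol\phi}(s))\bigr]=\exp(-v_{\boldsymbol\phi}(s))\,[\mbf P\mbf u_{\boldsymbol\phi}](s).
\]
This differs from the target $\exp(-v_{\boldsymbol\phi}(s))[\symP\mbf u_{\boldsymbol\phi}](s)$ by $\tfrac12\exp(-v_{\boldsymbol\phi}(s))\bigl([\mbf P\mbf u_{\boldsymbol\phi}](s)-[\mbf P^\top\mbf u_{\boldsymbol\phi}](s)\bigr)$. The difference is nonzero, for example, at a state adjacent to an absorbing terminal state $s_T$, where $\mbf P(s,s_T)>0$ but $\mbf P(s_T,s)=0$. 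Averaging over uniform $s$ with weights $\partial v_{\boldsymbol\phi}(s)/\partial\phi_i$ does not cancel it.

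Your claims of conditional unbiasedness and of a ``correct expected update'' are therefore false. An unbiased version would need an extra loss term $\llbracket-\exp(v_{\boldsymbol\phi}(s)-v_{\boldsymbol\phi}(s'))\rrbracket v_{\boldsymbol\phi}(s')$, which the stated estimator does not have. Relatedly, you cannot absorb a stray $\tfrac12$ on the second term into the proportionality of Proposition~\ref{prop:nat_grad_second_term}. That constant multiplies all three terms at once, so rescaling one summand changes the direction and its stationary points.

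\textbf{How the paper handles it, and the repair.} The paper makes no exactness claim here. It rewrites the entry as $\exp(-r(s)/\lambda)-\E_{s'\sim\symP(s,\cdot)}[\exp(v_{\boldsymbol\phi}(s')-v_{\boldsymbol\phi}(s))]+b''(\E_{s''}[\exp(2v_{\boldsymbol\phi}(s''))]-c'')$ and then takes a single-sample approximation. That step relies on replacing $s'\sim\symP(s,\cdot)$ with $s'\sim\mbf P(s,\cdot)$, a substitution the paper flags explicitly at the end of the proof of Proposition~\ref{prop:gdo_loss}. This is all ``is estimated by'' asks for.

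To repair your proof, drop the symmetrization argument and state this substitution as an approximation that is exact only when $\mbf P$ is symmetric. Your symmetry identity does correctly explain why sampling from $\mbf P$ is harmless for the symmetric integrand $\bigl(u(s)-u(s')\bigr)^2$ of Proposition~\ref{prop:gdo_loss}, where both endpoints are differentiated. It does not carry over to the stop-gradient form, in which only $v_{\boldsymbol\phi}(s)$ receives gradient.
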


\vspace{0.2cm}

\begin{proposition} \label{prop:surrogate_loss}
Given a transition $(s, a, r, s')$ and an extra state $s''$, where $s$ and $s''$ are sampled independently and uniformly at random, and $a$ is sampled under the default policy, $\pi_d$, we can minimize $ L^{\log}_\text{GDO}(\mbf v_{\boldsymbol{\phi}})$, with hyperparameters $\lambda, b'', c''$, with natural gradient descent by minimizing 
\begin{align}
    \ell^{\text{NG}}_{\text{GDO}}({\boldsymbol \phi}) = \llbracket \exp(-r / \lambda) - \exp(v_{\boldsymbol \phi}(s') - v_{\boldsymbol \phi}(s)) \nonumber \\
     + b''(\exp(2v_{\boldsymbol \phi}(s'')) - c'') \rrbracket v_{\boldsymbol \phi}(s), \label{eq:dr_quad_surr_loss}
\end{align}
where $\llbracket \cdot \rrbracket$ denotes the stop-gradient operator. \looseness=-1
\end{proposition}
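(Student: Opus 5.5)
The plan is to show that the gradient of $\ell^{\text{NG}}_{\text{GDO}}$ with respect to $\boldsymbol\phi$ is, in expectation over the sampling scheme, proportional to the natural-gradient direction of Eq.~\ref{eq:dr_quad_nat_grad}. Because of the stop-gradient, the bracketed quantity acts as a constant coefficient. Write $\delta(s,s',s'') = \exp(-r/\lambda) - \exp(v_{\boldsymbol\phi}(s') - v_{\boldsymbol\phi}(s)) + b''(\exp(2v_{\boldsymbol\phi}(s'')) - c'')$. Then
\[
\nabla_{\boldsymbol\phi} \ell^{\text{NG}}_{\text{GDO}}(\boldsymbol\phi) = \delta(s,s',s'')\, \frac{\partial v_{\boldsymbol\phi}(s)}{\partial \boldsymbol\phi},
\]
and its $i$-th coordinate is $\delta(s,s',s'')\,\partial v_{\boldsymbol\phi}(s)/\partial\phi_i$.

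The key step is to take the expectation. Since $s$ is uniform, write the expectation as $\frac{1}{|\mathcal S|}\sum_s \frac{\partial v_{\boldsymbol\phi}(s)}{\partial\phi_i}\,\E[\delta \mid s]$. Conditioning on $s$ is legitimate because $s''$ is drawn independently of $(s,a,r,s')$, and $s'$ depends on $s$ only through $a\sim\pi_d$ and $p$. By Proposition~\ref{prop:s_th_entry_estimate}, $\E[\delta\mid s]$ equals, up to the positive proportionality constants absorbed into $b'',c''$, the $s$-th entry of Eq.~\ref{eq:dr_gdo_quad_log_grad}. The sum is therefore
\[
\frac{1}{|\mathcal S|}\,\frac{\partial \mbf v_{\boldsymbol\phi}}{\partial\phi_i}\cdot \mbf g, \qquad \mbf g \text{ the vector in Eq.~\ref{eq:dr_gdo_quad_log_grad}}.
\]
By Proposition~\ref{prop:nat_grad_second_term}, $\mbf g$ is proportional (with a positive constant) to $\exp(-\mbf v_{\boldsymbol\phi})\odot \partial L_{\text{GDO}}/\partial \mbf u_{\boldsymbol\phi}$. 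Hence the expected gradient of the surrogate loss is a positive multiple of Eq.~\ref{eq:dr_quad_nat_grad}. Stochastic gradient descent on $\ell^{\text{NG}}_{\text{GDO}}$ is thus an unbiased (up to scaling) stochastic natural-gradient descent on $L^{\log}_{\text{GDO}}(\mbf v_{\boldsymbol\phi})$.

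The main obstacle is the bookkeeping of constants and of what depends on what. First, the proportionality constants in Propositions~\ref{prop:nat_grad_second_term} and~\ref{prop:s_th_entry_estimate} must be positive. The factor $2$ from differentiating quadratics and the $1/|\mathcal S|$ factors from uniform sampling are positive and are absorbed into the learning rate. The rescaling of $b',c'$ into $b'',c''$ must stay consistent, which is why these are declared hyperparameters.

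Second, the penalty term $b''(\exp(2v(s''))-c'')$ needs an unbiased estimate of $\mbf u^\top\mbf u - c'$ up to scaling. That is exactly why $s''$ is independent of $s$: the product of the coefficient with $\partial v(s)$ must factor in expectation.

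Third, I will note explicitly that $\exp(v(s')-v(s))$ gives, in expectation over $s'$, the quantity $\exp(-v(s))(\mbf P\mbf u)(s)$ rather than $\exp(-v(s))(\symP\mbf u)(s)$. I would defer to Proposition~\ref{prop:s_th_entry_estimate} for this, presumably justified by symmetry under uniform sampling of $s$. If needed, the identity $\sum_s w(s)(\mbf P\mbf u)(s)$ versus $\symP$ can be handled through the pairing with $\partial\mbf v/\partial\phi_i$.
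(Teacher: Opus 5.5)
Your argument is the paper's proof run in reverse. The paper starts from Eq.~\ref{eq:dr_quad_nat_grad}, rewrites the dot product as $n\,\E_{s\sim\rho}[\cdot]$, substitutes the single-sample estimate of Propositions~\ref{prop:nat_grad_second_term} and~\ref{prop:s_th_entry_estimate}, and notes that the gradient of $\ell^{\text{NG}}_{\text{GDO}}$ reproduces it. You start from the surrogate's gradient and take conditional expectations, using the same two propositions and the same independence of $s''$.

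The step that does not hold is the hope in your third point. Neither symmetry under uniform sampling nor the pairing with $\partial\mbf v_{\boldsymbol\phi}/\partial\phi_i$ converts $\mbf P$ into $\symP$. Let $n=|\mathcal S|$ and $w(s)=\exp(-v_{\boldsymbol\phi}(s))\,\partial v_{\boldsymbol\phi}(s)/\partial\phi_i$. The middle term of the surrogate then contributes $-\frac1n w^\top\mbf P\mbf u_{\boldsymbol\phi}$ in expectation. Eq.~\ref{eq:dr_quad_nat_grad} instead requires $-\frac1n w^\top\symP\mbf u_{\boldsymbol\phi}=-\frac{1}{2n}\bigl(w^\top\mbf P\mbf u_{\boldsymbol\phi}+\mbf u_{\boldsymbol\phi}^\top\mbf P w\bigr)$.

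Replacing $\mbf P$ by its symmetric part is free in a quadratic form of a single vector. That is why it is harmless for $\ell_{\text{GDO}}$, where $\sum_{s,s'}\mbf P(s,s')(u(s)-u(s'))^2=\sum_{s,s'}\symP(s,s')(u(s)-u(s'))^2$. It is not free in a bilinear pairing of two different vectors. Here $w\not\propto\mbf u_{\boldsymbol\phi}$ in general, and because of the stop-gradient nothing flows through $v_{\boldsymbol\phi}(s')$ to supply the transposed term. Concretely, take two states with $\mbf P=\bigl(\begin{smallmatrix}0&1\\0&1\end{smallmatrix}\bigr)$, $\mbf u=(1,1)^\top$ and $w=(1,0)^\top$. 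Then $w^\top\mbf P\mbf u=1$ but $w^\top\symP\mbf u=1/2$.

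The paper does not close this gap either. Its proof of Proposition~\ref{prop:gdo_loss} says explicitly that $s'\sim\symP(s,\cdot)$ is approximated by $s'\sim\mbf P(s,\cdot)$. Its proofs of Proposition~\ref{prop:s_th_entry_estimate} and of this proposition only claim a sample-based approximation. So drop the word ``unbiased'' and state the conclusion at that level: the surrogate's gradient is a stochastic approximation to the natural-gradient direction, and it is exact in expectation only when $\mbf P$ is symmetric. An exactly unbiased surrogate would also have to differentiate through $v_{\boldsymbol\phi}(s')$. For example, halve the $\exp(v_{\boldsymbol\phi}(s')-v_{\boldsymbol\phi}(s))$ term inside the bracket and add $-\tfrac12\llbracket\exp(v_{\boldsymbol\phi}(s)-v_{\boldsymbol\phi}(s'))\rrbracket\, v_{\boldsymbol\phi}(s')$.
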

{\it Proof sketch.} Using Propositions~\ref{prop:nat_grad_second_term} and~\ref{prop:s_th_entry_estimate}, we derive a sampled-based approximation of $\frac{\partial  L^{\log}_\text{GDO}(\mbf v_{\boldsymbol{\phi}})}{\partial \boldsymbol{\phi}}$. It is trivial to show that the gradient of $ \ell^{\text{NG}}_\text{GDO}$ recovers this approximation. We provide the full proof in Appendix~\ref{appendix:surrogate_loss_proof}.


When using $\ell^{\text{NG}}_\text{GDO}$, we are able to successfully learn $\log \mbf e$ when using one-hot representations of states as input to the neural network. However, we observe instabilities during optimization when using other state representations, which stems from the difficulty in balancing the GDO, $\exp(\mbf v)^\top (\mbf I + \mbf R - \symP) \exp(\mbf v)$, and the quadratic norm penalty, $(\exp(\mbf v)^\top \exp(\mbf v) - c)^2$. In $\ell^{\text{NG}}_{\text{GDO}}({\boldsymbol \phi})$, $b''$ is the hyperparameter balancing the two terms. We observe empirically that when $b''$ is too small, the norm penalty is not strong enough, and the norm of the learned $\exp(\mbf v_{\boldsymbol{\phi}})$ can decrease to near 0. When $b''$ is too large, the norm penalty constrains the norm of $\exp(\mbf v_{\boldsymbol{\phi}})$ well, but interferes with the objective. Dynamically adapting $b''$ offers little improvement. \looseness=-1

In the next section, we introduce the final adaptation to obtain our final loss function. It differs from $ \ell^{\text{NG}}_\text{GDO}$ by how we constrain the learned vector, given our observation that the quadratic norm penalty can be difficult to balance.  \looseness=-1

\subsection{Alternative Constraint on the Learned Vector}
\label{section:anchor}

Despite being difficult to balance, the quadratic norm penalty prevents the learned eigenvector from collapsing to the zero vector. An alternative way of seeing this is to imagine that, without the quadratic norm penalty, a learned vector perfectly aligned with the ground-truth can be expressed as $\alpha \mbf e$ for some $\alpha$, and 
in this case the objective function is minimized when $\alpha \to 0$. We still have the same issue when learning $\log \mbf e$ directly instead of $\mbf e$, as multiplying $\mbf e$ by a positive constant $\alpha$ would be equivalent to adding the constant $\log \alpha$ to all entries of $\mbf e$. When $\alpha \to 0$, all entries of the solution keep drifting to the left on the real number line. \looseness=-1

Thus, to drop the quadratic norm penalty, we need to prevent such constant shifts. We do so by fixing the value of a particular entry of the learned vector. In particular, we fix the value of a selected terminal state, $v(s_T)$, to 0. Intuitively, $v(s_T)$ acts as an anchor that holds the entire vector $\mbf v$ in place.
Mathematically, for a terminal state $s_T \in \mathcal{S}_T$, we solve the following optimization problem: 
\begin{align}
    \min_{\mbf v \in \mathbb{R}^{|\mathcal{S}|}} \exp(\mbf v)^\top (\mbf R - \symP) \exp(\mbf v) \ \text{s.t.}\ v(s_T) = 0. \label{eq:dr_anchor}
\end{align} 


While Eq.~\ref{eq:dr_anchor} might make intuitive sense, it is in fact not trivial to show that by minimizing this optimization problem instead, we still approximate the smallest eigenvector of $\mbf R - \symP$. We first study the solution of Eq.~\ref{eq:dr_anchor}, and present:
\begin{proposition}
    Minimizing Eq.~\ref{eq:dr_anchor} retrieves the logarithm of the column of the DR corresponding to the selected terminal state, $s_T$.
\end{proposition}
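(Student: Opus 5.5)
The plan is to first solve the problem in the $\mbf u$-space, where it is a convex quadratic program, and then pull the solution back to the $\mbf v$-space through $\mbf u = \exp(\mbf v)$. Write $\mbf A = \mbf R - \symP$, and let $\mbf e_T$ denote the indicator vector of the anchor state $s_T$. The constraint $v(s_T) = 0$ is equivalent to $u(s_T) = 1$. So Eq.~\ref{eq:dr_anchor} is the problem of minimizing $\mbf u^\top \mbf A \mbf u$ over positive vectors $\mbf u$ with $\mbf e_T^\top \mbf u = 1$. I will first drop the positivity requirement, solve the relaxed problem, and then check that its solution is positive.

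\textbf{Step 1: $\mbf A$ is positive definite.} Since $r(s) < 0$ for every $s$, each diagonal entry of $\mbf R$ satisfies $\exp(-r(s)/\lambda) > 1$, so $\mbf R \succ \mbf I$. Next, $\symP = (\mbf P + \mbf P^\top)/2$ is symmetric and nonnegative, and its row sums are at most $1$ in max over rows and columns. Hence its spectral radius is at most $\tfrac12(\|\mbf P\|_\infty + \|\mbf P\|_1)$. Here $\|\mbf P\|_\infty = 1$, and I will assume, as is implicit in the symmetrized construction, that $\|\mbf P\|_1 \le 1$ (this can also be argued via $\|\symP\|_2 \le \tfrac12(\|\mbf P\|_2+\|\mbf P^\top\|_2)$). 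Therefore $\mbf A \succ \mbf R - \mbf I \succ 0$. This makes the relaxed objective strictly convex, so the relaxed problem has a unique minimizer, and it also makes $\mbf Z = \mbf A\inv$ well defined.

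\textbf{Step 2: Lagrangian solution.} Stationarity of $\mbf u^\top \mbf A \mbf u - 2\mu(\mbf e_T^\top \mbf u - 1)$ gives $\mbf A \mbf u = \mu \mbf e_T$. Hence $\mbf u = \mu \mbf Z \mbf e_T$, which is $\mu$ times the column of $\mbf Z$ indexed by $s_T$. The constraint $u(s_T)=1$ then fixes $\mu = 1/\mbf Z(s_T,s_T)$. So $\mbf u^\star = \mbf Z(\cdot, s_T)/\mbf Z(s_T,s_T)$. Because the anchor only fixes the scale, and adding a constant in log-space is exactly the scale ambiguity discussed above, this is the column of the DR for $s_T$ up to normalization.

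\textbf{Step 3: positivity, the main obstacle.} To pass to $\mbf v = \log \mbf u^\star$, I need every entry of $\mbf Z(\cdot,s_T)$ to be strictly positive. I would use the Neumann expansion
\[
\mbf Z = (\mbf I - \mbf R\inv \symP)\inv \mbf R\inv = \sum_{k\ge 0} (\mbf R\inv \symP)^k \mbf R\inv ,
\]
which converges because $\mbf R\inv \symP$ has spectral radius below $1$ (from the bounds in Step 1). Every term is entrywise nonnegative, so $\mbf Z \ge 0$. Strict positivity of entry $(s, s_T)$ follows if some power of $\symP$ has a positive $(s,s_T)$ entry, that is, if $s$ and $s_T$ are connected in the graph of $\symP$. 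Symmetrization makes connectivity undirected. Since the terminal state is reachable from every state (the setting that makes the DR finite and meaningful, as in \citet{tse2025rewardaware}), every state is connected to $s_T$. I expect making this connectivity assumption precise to be the delicate point. If it failed, the entries for disconnected components would be zero and the logarithm undefined, which corresponds to those $v(s)$ drifting to $-\infty$.

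\textbf{Step 4: conclusion.} The relaxed minimizer $\mbf u^\star$ is strictly positive, so it lies in the feasible set of the positive-vector problem. It is therefore also the unique minimizer over positive $\mbf u$. Since $\exp$ is a bijection from $\mathbb{R}^{|\mathcal{S}|}$ onto positive vectors, $\mbf v^\star = \log \mbf u^\star = \log \mbf Z(\cdot,s_T) - \log \mbf Z(s_T,s_T)\mbf 1$ is the unique minimizer of Eq.~\ref{eq:dr_anchor}. This is the logarithm of the $s_T$ column of the DR, shifted by a constant so that $v(s_T)=0$.
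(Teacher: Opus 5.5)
\textbf{Gap: Step 1 is false in the paper's setting, and Steps 3--4 fall with it.}
Your Step 2 is essentially the paper's argument. The paper works directly in $\mbf v$-space, divides the stationarity condition by $\exp(\mbf v)>0$, and lets the free multiplier in $\exp(\mbf v)=-\tfrac{\mu}{2}(\mbf R-\symP)\inv\mbf b_{s_T}$ absorb both scale and sign, so it never needs $\mbf A\succ 0$ or $\mbf Z\ge 0$.

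The extra rigor you add in Steps 1 and 3 rests on a claim that fails here:
\begin{itemize}
\item The assumption $\|\mbf P\|_1\le 1$ is not implicit. Terminal states are absorbing, so the $s_T$ column of $\mbf P$ sums to $1+\sum_{s\neq s_T}P(s,s_T)>1$ as soon as $s_T$ has a predecessor.
\item The fallback via $\|\mbf P\|_2$ also fails: the row-stochastic matrix with both rows equal to $(0,1)$ has $\|\mbf P\|_2=\sqrt2$.
\item Concretely, $\symP(s_T,s_T)=1$ and $r(s_T)=-\delta$ give $\mbf A(s_T,s_T)=e^{\delta/\lambda}-1\approx\delta/\lambda$, while $\mbf A(s,s_T)=-P(s,s_T)/2$. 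For any predecessor $s$ of $s_T$, the principal minor of $\mbf A$ on $\{s,s_T\}$ is $\mbf A(s,s)(e^{\delta/\lambda}-1)-P(s,s_T)^2/4$. This is negative for small $\delta$, e.g.\ $\delta=0.001$, $\lambda=20$, $P(s,s_T)=1/4$.
\end{itemize}
So $\mbf A$ is indefinite, $\mbf R\inv\symP$ has spectral radius above $1$, and your Neumann series diverges.

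Its conclusion is false too. Let $N$ be the states other than $s_T$, $\mbf a=\mbf A(N,s_T)\le 0$, and $S=\mbf A(s_T,s_T)-\mbf a^\top\mbf A_{NN}\inv\mbf a$. Block inversion gives $\mbf Z(s_T,s_T)=S\inv$ and $\mbf Z(N,s_T)=-S\inv\mbf A_{NN}\inv\mbf a$. In the single-goal grid worlds, $\mbf A_{NN}\succ 0$ and $S<0$, because $\mbf a^\top\mbf A_{NN}\inv\mbf a>0$ dominates $e^{\delta/\lambda}-1$. Hence the entire column $\mbf Z(\cdot,s_T)$ is negative, and $\log\mbf Z(\cdot,s_T)$ in Step 4 is undefined.

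\textbf{How to repair it.}
Ask only for what the constrained problem uses. On the slice $u(s_T)=1$ the objective is $\mbf u_N^\top\mbf A_{NN}\mbf u_N+2\mbf a^\top\mbf u_N+\mbf A(s_T,s_T)$, so strict convexity needs only $\mbf A_{NN}\succ 0$. This holds, for example, when $s_T$ is the unique terminal state and $\rho(\symP_{NN})\le 1$ (as with symmetric grid-world moves), since then $\mbf A_{NN}\succeq\mbf R_{NN}-\mbf I\succ 0$.

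The unique minimizer is then $\mbf u_N^\star=-\mbf A_{NN}\inv\mbf a$. It is strictly positive if you run your Neumann-series and connectivity argument on the M-matrix $\mbf A_{NN}$ instead of $\mbf A$. The block formulas above then give $\mbf u^\star=\mbf Z(\cdot,s_T)/\mbf Z(s_T,s_T)$.

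So your Step 2 formula survives, but the conclusion must read $\mbf v^\star=\log(\mbf Z(\cdot,s_T)/\mbf Z(s_T,s_T))$. That is the logarithm of the column up to a possibly negative scalar, which is exactly what the paper's factor $-\mu/2$ absorbs.

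\textbf{Multiple terminal states.}
With two or more terminal states, the non-anchor ones remain in $N$ with diagonal entry $\approx\delta/\lambda$. The same minor computation gives $\mbf A_{NN}$ a negative-curvature direction with positive entries, so Eq.~\ref{eq:dr_anchor} is unbounded below. Your uniqueness and minimality claims therefore need a single-terminal (or not-too-small $\delta$) hypothesis, which the paper's stationarity computation does not address.
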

\begin{proof}
    Using Lagrange multipliers, Eq.~\ref{eq:dr_anchor} is equivalent to
\begin{align}
    \min_{\mbf v \in \mathbb{R}^{|\mathcal{S}|}} \max_{\mu \in \mathbb{R}} \exp(\mbf v)^\top (\mbf R - \symP) \exp(\mbf v) + \mu  v(s_T).
\end{align}
Let $\mbf b_i$ be the standard basis vector with a 1 at entry $i$. Taking the gradient with respect to $\mbf v$ and setting it to 0, we have
\begin{align}
    2\exp(\mbf v) \odot (\mbf R - \symP)\exp(\mbf v) + \mu \mbf b_{s_T} = 0 \\
    (\mbf R - \symP)\exp(\mbf v) = -\frac{\mu}{2} \exp(-v(s_T)) \mbf b_{s_T} \\
    \exp(\mbf v) = -\frac{\mu}{2} (\mbf R - \symP)\inv \mbf b_{s_T} 
\end{align}
where $(\mbf R - \symP)\inv \mbf b_{s_T}$ is just the column of the DR corresponding to the chosen terminal state, $s_T$.
\end{proof}

Theoretical analysis reveals that by solving Eq.~\ref{eq:dr_anchor}, we in fact retrieve the $s_T$ column of the DR. However, we observe empirically that the vector we learn aligns with the principal eigenvector of the DR very well. We remedy this seeming contradiction by proving the following theorem:
\begin{theorem}
  \label{thm:eigenspace}
  Let $r(s_T) = -\delta\ \forall s_T \in \mathcal{S}_T$, where $\delta > 0$. When $\delta\to0^+$, the columns of the DR ($(\mbf R - \mbf P)\inv$) corresponding to all terminal states form a basis for the eigenspace of the principal eigenvalue of the DR.
\end{theorem}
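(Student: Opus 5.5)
The plan is to exploit the block structure that absorbing terminal states impose on $\mbf R - \mbf P$. I will order the states so that the non-terminal states $\mathcal{N} = \mathcal{S}\setminus\mathcal{S}_T$ come first and the terminal states come last. Since each terminal state is absorbing, its row of $\mbf P$ is the standard basis vector $\mbf b_{s_T}$. Its diagonal entry of $\mbf R$ is $\exp(\delta/\lambda)$. Writing $\epsilon = \exp(\delta/\lambda) - 1$, so that $\epsilon \to 0^+$ as $\delta \to 0^+$, this gives
\begin{align}
\mbf R - \mbf P = \begin{pmatrix} \mbf A & -\mbf B \\ \mbf 0 & \epsilon \mbf I \end{pmatrix},
\end{align}
where $\mbf A = \mbf R_{\mathcal{N}} - \mbf P_{\mathcal{N}\mathcal{N}}$ and $\mbf B = \mbf P_{\mathcal{N}\mathcal{S}_T}$. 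The key observation is that $\mbf A$ and $\mbf B$ do not depend on $\delta$. Moreover, $\mbf A$ is invertible, because every non-terminal reward is negative: each diagonal entry $\exp(-r(s)/\lambda) > 1$, while each row of $\mbf P_{\mathcal{N}\mathcal{N}}$ sums to at most $1$. Hence $\mbf A$ is strictly row diagonally dominant.

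Block inversion then gives an explicit formula:
\begin{align}
\mbf Z = \begin{pmatrix} \mbf A\inv & \epsilon\inv \mbf A\inv \mbf B \\ \mbf 0 & \epsilon\inv \mbf I \end{pmatrix}.
\end{align}
So the column of the DR for a terminal state $s_T$ is $\mbf z_{s_T} = \epsilon\inv\big(\mbf A\inv \mbf B \mbf b_{s_T},\ \mbf b_{s_T}\big)$. Because $\mbf Z$ is block upper triangular, its spectrum is the spectrum of $\mbf A\inv$ together with $\epsilon\inv$ repeated $|\mathcal{S}_T|$ times. The eigenvalues of $\mbf A\inv$ are fixed and finite, so for all sufficiently small $\delta$ the value $\epsilon\inv$ strictly dominates them in magnitude. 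Therefore $\epsilon\inv$ is the principal eigenvalue.

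Next I will characterize its eigenspace directly. A vector $\mbf x = (\mbf x_{\mathcal{N}}, \mbf x_T)$ satisfies $\mbf Z\mbf x = \epsilon\inv\mbf x$ exactly when $(\mbf R - \mbf P)\mbf x = \epsilon \mbf x$. The terminal block of this equation holds automatically. The non-terminal block reads $(\mbf A - \epsilon\mbf I)\mbf x_{\mathcal{N}} = \mbf B \mbf x_T$. For small $\epsilon$, $\mbf A - \epsilon\mbf I$ is still invertible, since $\epsilon$ is then not an eigenvalue of $\mbf A$. Hence the eigenspace is exactly
\begin{align}
\big\{ \big((\mbf A-\epsilon\mbf I)\inv\mbf B\mbf x_T,\ \mbf x_T\big) : \mbf x_T \in \mathbb{R}^{|\mathcal{S}_T|} \big\},
\end{align}
which has dimension $|\mathcal{S}_T|$. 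A natural basis is $\mbf w_{s_T} = \big((\mbf A-\epsilon\mbf I)\inv\mbf B\mbf b_{s_T},\ \mbf b_{s_T}\big)$, one vector for each $s_T \in \mathcal{S}_T$.

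The final step compares the rescaled columns $\epsilon\,\mbf z_{s_T} = \big(\mbf A\inv\mbf B\mbf b_{s_T},\ \mbf b_{s_T}\big)$ with these basis vectors. The terminal blocks agree exactly. The non-terminal blocks differ by $\big[(\mbf A-\epsilon\mbf I)\inv - \mbf A\inv\big]\mbf B\mbf b_{s_T}$, which is $O(\epsilon)$ by continuity of matrix inversion. So, as directions, the terminal columns of the DR converge to the eigenvectors $\mbf w_{s_T}$. They are also linearly independent, because their terminal blocks are distinct standard basis vectors. This yields the limiting basis statement.

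The main obstacle is interpreting the theorem precisely. For any fixed $\delta > 0$, the terminal columns are not exactly eigenvectors: each one satisfies $(\mbf R - \mbf P)\mbf z_{s_T} = \mbf b_{s_T}$ rather than $\epsilon\,\mbf z_{s_T}$. The claim must therefore be read as a statement about normalized directions, with the angle between each column and the eigenspace tending to zero. I would make this explicit by bounding that angle by $O(\epsilon)$. A secondary point is that the theorem concerns the unsymmetrized $\mbf P$, which is exactly what makes the terminal rows clean basis vectors.
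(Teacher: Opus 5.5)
Your proposal is correct and follows essentially the same route as the paper. Both order the terminal states last to get the block upper-triangular form of $\mbf R - \mbf P$, and both show the non-terminal block is invertible: you use strict diagonal dominance where the paper uses Gershgorin discs. Both then identify $\epsilon$ as the smallest-magnitude eigenvalue, with eigenspace $\{((\mbf A-\epsilon\mbf I)\inv\mbf B\mbf y,\ \mbf y)\}$, and compare it with the terminal columns obtained by block inversion. Your version is somewhat more careful than the paper's, since you keep $\epsilon=\exp(\delta/\lambda)-1$ exact instead of approximating it by $\delta$, and you note explicitly that $\mbf A$ does not depend on $\delta$. You also make precise that the terminal columns only converge in direction to the eigenspace, at rate $O(\epsilon)$, rather than being exact eigenvectors for fixed $\delta$.
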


{\it Proof sketch.} We first show that $\delta$ is the principal eigenvalue of the DR. We derive the DR columns corresponding to terminal states, and the eigenvectors corresponding to $\delta$. We conclude the theorem by comparing their formulae. We provide the full proof in Appendix~\ref{appendix:eigenspace}.

Note that Theorem~\ref{thm:eigenspace} considers the DR without symmetrization, while in practice, such as in Eq.~\ref{eq:dr_anchor}, we use the symmetrized DR, where we replace $\mbf P$ with $\tilde{\mbf P}=(\mbf P + \mbf P^\top ) / 2$. Nevertheless, Theorem~\ref{thm:eigenspace} provides good intuition for why optimizing Eq.~\ref{eq:dr_anchor} retrieves a vector that highly aligns with the principal eigenvector of the symmetrized DR. \looseness=-1

\begin{remark}
    Theorem~\ref{thm:eigenspace}, combined with the fact that the $s$-th column of the DR captures the reward-aware distance from all states to $s$~\citep{tse2025rewardaware}, explains why the principal eigenvector of the DR has been successful in applications such as reward shaping. \looseness=-1
\end{remark}


\subsection{DROGO}
\label{sec:drogo}
We present the \textbf{D}efault \textbf{R}epresentation \textbf{O}bjective via \textbf{G}raph \textbf{O}ptimization (DROGO), which approximates 
$\log \mbf e$ by optimizing Eq.~\ref{eq:dr_anchor} using natural gradient. Let $v_{\boldsymbol \phi}$ be the function approximated by a neural network parameterized by ${\boldsymbol \phi}$. \looseness=-1 

\begin{theorem}[DROGO loss function]
    \label{thm:drogo_loss}
    Given a transition $(s,a, r, s')$, where $s$ is sampled uniformly at random, and $a$ is sampled under the default policy, $\pi_d$, we minimize Eq.~\ref{eq:dr_anchor} by minimizing
\begin{align}
    &\ell_{\text{DROGO}}(\boldsymbol{\phi}) = \Big \llbracket \exp(-r / \lambda)  \nonumber \\ & - \exp\Big(v_{\boldsymbol \phi}(s') (1 - \mathbb{I}_{\{ s' = s_T \}} ) - v_{\boldsymbol \phi}(s) \Big) \Big \rrbracket v_{\boldsymbol \phi}(s). \label{eq:drogo_loss}
\end{align}
\end{theorem}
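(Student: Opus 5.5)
The plan mirrors the derivation of Proposition~\ref{prop:surrogate_loss}, with the quadratic norm penalty replaced by the anchor constraint. I would first remove the constraint in Eq.~\ref{eq:dr_anchor} by substitution. Since $v(s_T)=0$ is fixed, optimize only over the free coordinates $\mbf v_{-T}$ (all states except $s_T$), with $u(s_T)=\exp(0)=1$. The objective then becomes an unconstrained function $F(\mbf v_{-T}) = \exp(\mbf v)^\top(\mbf R-\symP)\exp(\mbf v)$.

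Next I would compute the Euclidean gradient with respect to each free coordinate $v(s)$, $s\neq s_T$. By the chain rule, exactly as in Eq.~\ref{eq:dr_gdo_quad_log_loss_grad}, and using symmetry of $\mbf R-\symP$,
\begin{align}
\frac{\partial F}{\partial v(s)} = 2\exp(v(s))\Big(\exp(-r(s)/\lambda)\exp(v(s)) - \sum_{s'}\symP(s,s')u(s')\Big),
\end{align}
where $u(s')=\exp(v(s'))$ for $s'\neq s_T$ and $u(s_T)=1$. I would then apply the natural gradient of Proposition~\ref{prop:riemannian}, restricted to the free coordinates, which multiplies by $\exp(-2v(s))$. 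This gives, up to the constant $2$, the direction
\begin{align}
\exp(-r(s)/\lambda) - \sum_{s'}\symP(s,s')\exp\big(v(s')(1-\mathbb{I}_{\{s'=s_T\}}) - v(s)\big).
\end{align}
This is the analogue of Eq.~\ref{eq:dr_gdo_quad_log_grad} with no norm term, and with the anchor encoded by the indicator.

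I would then follow Proposition~\ref{prop:s_th_entry_estimate} to turn this into a sample estimate. Sampling $a\sim\pi_d$ gives $s'\sim\mbf P(s,\cdot)$, so the sum over $s'$ is estimated by a single sampled $s'$. The reward term $\exp(-r/\lambda)$ is just the observed reward of $s$.

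The main obstacle is the symmetrization. The sampled $s'$ is unbiased for $\mbf P\mbf u$, not for $\symP\mbf u$. I would handle this as the paper does in the proof of Proposition~\ref{prop:s_th_entry_estimate}. Under uniform sampling of $s$, the quantity $\sum_s \frac{\partial \mbf v}{\partial\phi}(s)\,\exp(-v(s))(\mbf P^\top \mbf u)(s)$ can be rewritten as an expectation over transitions with the roles of $s$ and $s'$ swapped. The alternative is to argue that this estimator is what is used in practice. I would lean on whatever argument Appendix~\ref{appendix:s_th_entry_proof} used, and flag any bias as equally present in the earlier propositions.

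Finally, I would chain-rule through the network as in Eq.~\ref{eq:dr_quad_nat_grad}. The parameter update is $\frac{\partial v_{\boldsymbol\phi}(s)}{\partial\boldsymbol\phi}$ times the estimated natural-gradient entry, averaged over uniformly sampled $s$. Placing that entry inside the stop-gradient, as in Eq.~\ref{eq:drogo_loss}, gives exactly this gradient, which completes the argument.

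One caveat: the network still outputs a value at $s_T$. I would note that either $s_T$ is excluded from the update, or its output is overridden to $0$ whenever it appears as $s'$. The indicator handles the latter case, so the anchor is respected wherever it enters the objective.
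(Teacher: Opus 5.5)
Your derivation is correct and uses the same ingredients as the paper: the log-space natural gradient with $\mbf G(\mbf v)=\diag(\exp(2\mbf v))$, a single-transition estimate, and a stop-gradient surrogate. It is organized differently, though. The paper's proof is a two-line edit of Proposition~\ref{prop:surrogate_loss}: delete the $b''$ term from $\ell^{\text{NG}}_{\text{GDO}}$ and impose $v(s_T)=0$ by inserting the indicator. You instead eliminate $v(s_T)$ by substitution and redo the gradient computation on $\exp(\mbf v)^\top(\mbf R-\symP)\exp(\mbf v)$ itself. This buys two things the paper leaves implicit.
\begin{itemize}
\item It explains why the indicator is the correct encoding of the anchor. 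After substitution, $v(s_T)$ enters the free-coordinate gradients only through $\symP(s,s_T)u(s_T)$ with $u(s_T)=1$.
\item It shows why deleting the penalty lands on $\mbf R-\symP$ rather than $\mbf I+\mbf R-\symP$. In the proof of Proposition~\ref{prop:nat_grad_second_term}, the $\mbf I\mbf u$ term was absorbed into $b',c'$, so it disappears together with the penalty.
\end{itemize}
The paper's route buys brevity by reusing a surrogate it has already derived.

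One correction on the symmetrization step: the paper does not use a role swap. It simply replaces $s'\sim\symP(s,\cdot)$ by $s'\sim\mbf P(s,\cdot)$, as stated at the end of the proof of Proposition~\ref{prop:gdo_loss}. Your swap is a legitimate unbiased estimator of the $\mbf P^\top$ half of $\symP\mbf u$. However, carrying it out halves the existing $\exp(v_{\boldsymbol\phi}(s')-v_{\boldsymbol\phi}(s))$ term and adds a second term $-\tfrac12\llbracket\exp(v_{\boldsymbol\phi}(s)-v_{\boldsymbol\phi}(s'))\rrbracket v_{\boldsymbol\phi}(s')$. So it does not produce Eq.~\ref{eq:drogo_loss}. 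The stated loss is reached only through your fallback, that is, by accepting the same $\symP\approx\mbf P$ approximation the paper makes.

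Your caveat about $s_T$ needs no exclusion either. If $s=s_T$ is sampled, absorption forces $s'=s_T$, and the coefficient becomes $\exp(\delta/\lambda)-\exp(-v_{\boldsymbol\phi}(s_T))$. Those updates therefore only pull $v_{\boldsymbol\phi}(s_T)$ toward $-\delta/\lambda\approx 0$, which is consistent with the anchor.
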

\begin{proof}
    Removing the quadratic penalty, Eq.~\ref{eq:dr_quad_surr_loss} becomes
    \begin{align}
        \llbracket \exp(-r / \lambda) - \exp(v_{\boldsymbol \phi}(s') - v_{\boldsymbol \phi}(s))  \rrbracket v_{\boldsymbol \phi}(s).
    \end{align}
    The constraint $v(s_T) = 0$ can be enforced simply by multiplying $v_{\boldsymbol \phi}(s')$ by $( 1 - \mathbb{I}_{\{ s' = s_T \}})$, where $\mathbb{I}$ denotes the indicator function.
\end{proof}

\begin{figure*}[ht]
  \vskip 0.2in
  \begin{center}
    \centerline{\includegraphics[width=0.9\textwidth]{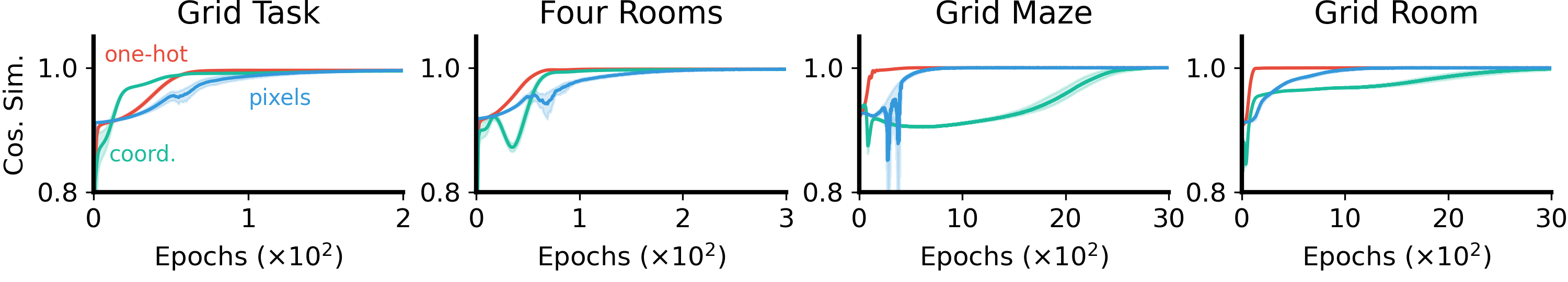}}
    \caption{
      The cosine similarity between the learned and ground-truth principal eigenvectors for different state representations averaged over 10 seeds. The shaded area indicates 95\% bootstrap confidence interval.
    }
    \label{fig:cos_sim}
  \end{center}
\end{figure*}

\section{Experiments}
We verify empirically that optimizing the DROGO objective retrieves the principal eigenvector of the DR. We present an ablation study in Appendix~\ref{appendix:ablation}. Finally, we show the utility of the learned eigenvectors for reward shaping. \looseness=-1

\subsection{Learning the Principal Eigenvector of the DR}
\label{section:cos_sim_exp}

We apply DROGO to approximate the principal eigenvector of the DR using neural networks. To be able to validate our results against the ground-truth, we consider the set of grid world environments used by \citet{tse2025rewardaware} (see Appendix~\ref{appendix:environments}). A neural network takes as input a state, and outputs the logarithm of the corresponding entry in the principal eigenvector. 

We generate 200,000 transitions using a uniform random initial state distribution and the default policy, which we choose to be the uniform random policy. We then train the neural networks by minimizing the DROGO loss (Eq.~\ref{eq:drogo_loss}) using mini-batches from the dataset. To demonstrate the robustness of DROGO, we use three different representations of states as input to the neural network: (1) one-hot encoding, where state $s_i$ is represented as $\mbf b_i$, (2) $(x, y)$-coordinates, and (3) pixels. See details in Appendix~\ref{appendix:drogo_details}. \looseness=-1

To evaluate the quality of the approximation, we compute the cosine similarity between the learned vector and the ground-truth. The cosine similarity is computed between the logarithm of these eigenvectors. 
Fig.~\ref{fig:cos_sim} shows the cosine similarity in four grid world environments for the three state representations. We can see that regardless of the choice of the state representation, DROGO enables approximating the principal eigenvector very well, as the cosine similarity converges to near 1 for all settings. A visual comparison of the learned and ground-truth eigenvectors is in Appendix~\ref{appendix:drogo_details}.

\subsection{Reward Shaping}

While we demonstrate that the cosine similarity between the learned and ground-truth principal eigenvectors is close to 1, this metric does not fully reflect the performance of using these learned eigenvectors. To demonstrate the practical utility of the learned eigenvectors, we perform reward shaping. \looseness=-1

In reward shaping, we provide the agent with a shaping reward to aid in learning the optimal policy. We apply the principal eigenvector of the DR for potential-based reward shaping~\citep{ng1999policy}. Let $\mbf v$ be the learned eigenvector, the shaping reward is $\tilde{r}_t = \gamma v(s_{t + 1}) - v(s_t)$, where $\gamma$ is the discount factor. Then, we train a Q-learning~\citep{watkins1989learning} agent using a convex combination of the original environment reward and the shaping reward, resulting in $(1 - \beta)r_t + \beta \tilde{r}_t$, where $\beta \in [0, 1]$ is a hyperparameter.

We perform reward shaping using the principal eigenvector of the DR in separate experiments for each state representation. We compare against using the principal eigenvector of the SR~\citep{dayan1993improving}, and no reward shaping (plain Q-learning). Applying the principal eigenvectors of the SR and DR for reward shaping is common in RL~\citep{wulaplacian, wang2021towards, tse2025rewardaware}. We compute the principal eigenvector of the SR in closed form to prevent approximation errors from hurting the performance of the baseline, effectively giving it an advantage. Similar to \citet{tse2025rewardaware}, we use $\gamma = 0.99$, $\epsilon = 0.05$ for $\epsilon$-greedy exploration, and perform a grid search over $\beta$ ($[0.25, 0.5, 0.75, 1.0]$), and Q-learning's step size ($[0.1,0.3,1.0]$). \looseness=-1

We perform reward shaping in the same environments of Section~\ref{section:cos_sim_exp}. In these environments, the agent is tasked with reaching the goal state while avoiding low-reward regions on the way.
To measure the effectiveness of different shaping rewards, we evaluate the mean number of steps required to converge to the optimal return ($N_\text{OPT}$). As the DR has been shown to give rise to reward-aware behavior, we also measure the mean number of visits to low-reward states during training ($N_\text{VISIT}$). The lower $N_\text{OPT}$, the more effective the shaping reward. The lower $N_\text{VISIT}$, the more the shaping reward guides the agent to avoid low-reward regions. \looseness=-1 

Table~\ref{table:reward_shaping} presents $N_\text{OPT}$ and $N_\text{VISIT}$ averaged over 10 seeds for different shaping rewards. In the grid room environment, the SR and no reward shaping cannot converge to the optimal return by the end of training, so the corresponding $N_\text{OPT}$'s are not reported. 
Using the DR for reward shaping consistently leads to lower $N_\text{OPT}$ compared to the baselines. 
We observe that when using the DR with pixels representation for reward shaping in four rooms, there are a small number of seeds with slower learning speed due to small approximation errors in the principal eigenvector, leading to larger $N_\text{OPT}$ than its one-hot and coordinates counterparts. Nevertheless, it achieves comparable performance to the SR, whose principal eigenvector is computed in closed form and does not have to contend with approximation errors.
On the other hand, using the DR for reward shaping leads to similar or lower $N_\text{VISIT}$ compared to the SR. The difference is particularly significant in four rooms and grid room.  \looseness=-1

In general, the DR performs better than the SR. This is because, unlike the SR, the DR captures reward information in the principal eigenvector. During reward shaping, the reward-aware principal eigenvector causes the agent to prefer high-reward paths in the environment, eventually leading to lower $N_\text{OPT}$ and $N_\text{VISIT}$. These results match the results reported by \citet{tse2025rewardaware} in the tabular case, but now using eigenvectors approximated through neural networks obtained from high-dimensional observations. This is a substantial improvement over computing such eigenvectors in closed form in the tabular case. Finally, we also provide learning curves in Appendix~\ref{appendix:reward_shaping}. \looseness=-1

\begin{table*}[t]
  \caption{The mean number of steps, over 10 seeds, required to converge to the optimal return ($N_\text{OPT}$) and the mean number of visits to low-reward states ($N_\text{VISIT}$) when using the DR with one-hot ($\text{DR}_{\textsc{one-hot}}$), coordinates ($\text{DR}_{\textsc{coord}}$), and pixels ($\text{DR}_{\textsc{pixels}}$) as state representation for reward shaping, using the SR for reward shaping (SR), and not performing reward shaping (NS). The lower- and superscripts denote the 95\% bootstrap confidence interval. We highlight all DR methods whose confidence intervals do not overlap with that of the SR. \looseness=-1}
  \label{table:reward_shaping}
  \begin{center}
    \begin{small}
      \begin{sc}
      \renewcommand{\arraystretch}{1.5}
        \resizebox{\textwidth}{!}{
        \begin{tabular}{lrrrrrrrrrr}
          \toprule
          Env  & $N_\text{OPT}$ $(\times 10^3)$  &  & & & & $N_\text{VISIT}$ $(\times 10^2)$ & &&  &  \\
          \cmidrule{2-11}
          & NS & SR & $\text{DR}_{\text{one-hot}}$ & $\text{DR}_{\text{coord}}$ & $\text{DR}_{\text{pixels}}$ & NS & SR & $\text{DR}_{\text{one-hot}}$ & $\text{DR}_{\text{coord}}$ & $\text{DR}_{\text{pixels}}$ \\
          \midrule
          Grid Task  & \tentry{13.75}{0.18}{0.17} & \tentry{2.04}{0.04}{0.04} & \btentry{0.97}{0.04}{0.04} & \btentry{0.98}{0.05}{0.04} & \btentry{0.98}{0.05}{0.05} & \tentry{7.87}{0.12}{0.12} & \tentry{6.51}{0.43}{0.49} & \tentry{6.62}{0.56}{0.55} & \tentry{6.61}{0.77}{0.80} & \tentry{6.60}{0.61}{0.69}   \\
          Four Rooms & \tentry{14.39}{0.19}{0.27} & \tentry{1.37}{0.04}{0.04} & \btentry{0.18}{0.03}{0.04} & \btentry{0.16}{0.00}{0.00} & \tentry{1.25}{1.08}{1.83} & \tentry{3.12}{0.08}{0.07} & \tentry{1.30}{0.03}{0.03} & \btentry{0.05}{0.03}{0.03} & \btentry{0.07}{0.03}{0.03} & \btentry{0.07}{0.05}{0.04}\\
          Grid Maze & \tentry{71.82}{3.32}{4.29} & \tentry{15.20}{0.34}{0.37} & \btentry{5.97}{0.28}{0.29} & \btentry{6.46}{0.72}{1.14} & \btentry{7.47}{1.45}{1.63} & \tentry{0.65}{0.03}{0.03} & \tentry{0.42}{0.03}{0.04} & \tentry{0.44}{0.03}{0.03} & \tentry{0.46}{0.03}{0.03} & \tentry{0.46}{0.03}{0.03}  \\
          Grid Room & - & - & \btentry{7.68}{0.17}{0.17} & \btentry{7.72}{0.32}{0.25} & \btentry{7.83}{0.24}{0.22} & \tentry{1.48}{0.08}{0.05} & \tentry{1.73}{0.08}{0.07} & \btentry{0.53}{0.03}{0.03} & \btentry{0.55}{0.04}{0.05} & \btentry{0.54}{0.03}{0.03} \\
          \bottomrule
        \end{tabular}
        }
      \end{sc}
    \end{small}
  \end{center}
  \vskip -0.1in
\end{table*}

\section{Related Work}

The SR~\citep{dayan1993improving} represents a state by the expected discounted number of visits to successor states. Specifically, it counts all visits to successor states. The first-occupancy representation~\citep[FR;][]{moskovitzfirst} is a variant that captures only the first visit to successor states. The $\lambda$ representation~\citep[$\lambda$R;][]{moskovitz2023state} can be viewed as an interpolation between the SR and FR. Compared to the FR and $\lambda$R, the DR~\citep{piray2021linear} generalizes the SR in an orthogonal direction by capturing not just the transitions, but also the rewards in the environment. 

The SR and its extension in continuous spaces, the successor measure~\citep{Blier2021learning}, have given rise to a wide range of methods for transfer. Such works include successor features~\citep{barreto2017successor, barreto2018transfer, BorsaBQMHMSS19}, forward-backward representation~\citep{touati2021learning, TouatiRO23, TirinzoniTFGKXL25}, proto-successor measure~\citep{agarwal2025proto}, and distributional successor measure~\citep{WiltzerFGT0DBR24}.
The DR, unlike the SR, also captures the reward function. Being more specific to environment dynamics comes at the price of less flexible transfer. Without assuming access to environment dynamics, it can be used to directly compute the optimal policy when only rewards at terminal states change~\citep{tse2025rewardaware}. 

Apart from transfer, where the goal is to efficiently compute the optimal policy for downstream rewards, the SR has been applied to better learn from a single reward function. 
The SR has the same set of eigenvectors as the Laplacian~\citep{machado2018eigenoption, stachenfeld2014design}, which was shown to be a good basis for approximating the value function~\citep{mahadevan2005proto, mahadevan2007proto}. As they reflect the geometry of the state space, they have been subsequently applied as a distance metric for reward shaping~\citep{wulaplacian, wang2021towards, wang2023reachability}, an inductive bias for temporally-extended exploration~\citep{machado2023temporal, klissarov2023deep}, effective state representations~\citep{le2022generalization, FarebrotherGALG23}, and a density model for count-based exploration~\cite{machado2020count}.
Given the close connection between the SR and DR, \citet{tse2025rewardaware} applied the principal eigenvector of the DR to similar applications, and found that the DR gave rise to better performance than the SR due to its reward-aware nature. \looseness=-1

The graph-drawing objective~\citep[GDO;][]{koren2003spectral} has been applied to learn the eigenvectors of the SR (or Laplacian) in RL~\citep{wulaplacian}. Extensions address problems with the GDO such as arbitrary rotations of the learned eigenvectors~\citep{wang2021towards, gomez2024proper}. However, similar methods do not exist for the DR. DROGO is the first to allow direct learning of the principal eigenvector of the DR. \looseness=-1


Apart from its application in computational RL, since the SR can be viewed as a middle ground between model-free and model-based decision making, it has received interest in behavioral and neuroscience studies~\citep{momennejad2017successor, stachenfeld2014design, stachenfeld2017hippocampus, gomez2025representations}. The DR was proposed originally in neuroscience to address, among others, the inflexibility of the SR in replanning tasks~\citep{piray2021linear}, and has been extended recently to incorporate compositionality~\citep{piray2025reconciling}.
Different from these works, we focus solely on computational RL. \looseness=-1



\section{Conclusion}
In this paper, we present DROGO, which optimizes a variant of the GDO with log-space parameterization and a point-wise constraint using natural gradient, and provide theoretical justification. Empirically, we demonstrate the effectiveness of DROGO in a number of environments with different state representations. We further demonstrate the utility of the learned eigenvectors by performing reward shaping. 

In this work, we run our experiments in grid world environments to be able to compare the eigenvectors learned using DROGO against the ground-truth computed in closed form by eigendecomposition.
In the future, it will be interesting to apply the learned eigenvectors for downstream applications, similar to how \citet{klissarov2023deep, chandrasekar2025towards} used GDO~\citep{wulaplacian, gomez2024proper} for exploration and generalization in RL.\looseness=-1

In our reward shaping experiments, we follow a two-stage approach, where we first learn the principal eigenvector before applying it for reward shaping. Such an approach serves our purpose in this paper: to demonstrate the utility of the approximated eigenvectors. However, as \citet{tse2025rewardaware} demonstrated the utility of the online approximation of the principal eigenvector of the DR in their option discovery experiments, future work can adapt DROGO to learn both the principal eigenvector and the policy simultaneously.

Given the recent prominence of Laplacian-/SR-based research in RL, DROGO unlocks promising new directions by scaling up a reward-aware generalization of the SR. \looseness=-1

\section*{Acknowledgements}
We thank Diego Gomez and Siddarth Chandrasekar for useful discussions. This research was supported in part by
the Natural Sciences and Engineering Research Council of Canada (NSERC), the Canada CIFAR
AI Chair Program, and Alberta Innovates. It was also enabled in part by computational resources
provided by the Digital Research Alliance of Canada. \looseness=-1

\section*{Impact Statement}
This paper presents work whose goal is to advance the field of Machine
Learning. There are many potential societal consequences of our work, none of
which we feel must be specifically highlighted here.

\nocite{langley00}

\bibliography{example_paper}
\bibliographystyle{icml2026}

\newpage
\appendix
\onecolumn
\section{An Alternative DR Formulation}
\label{appendix:formulation_diff}

\begin{figure}[ht]
  \vskip 0.2in
  \begin{center}
    \centerline{\includegraphics[width=\columnwidth]{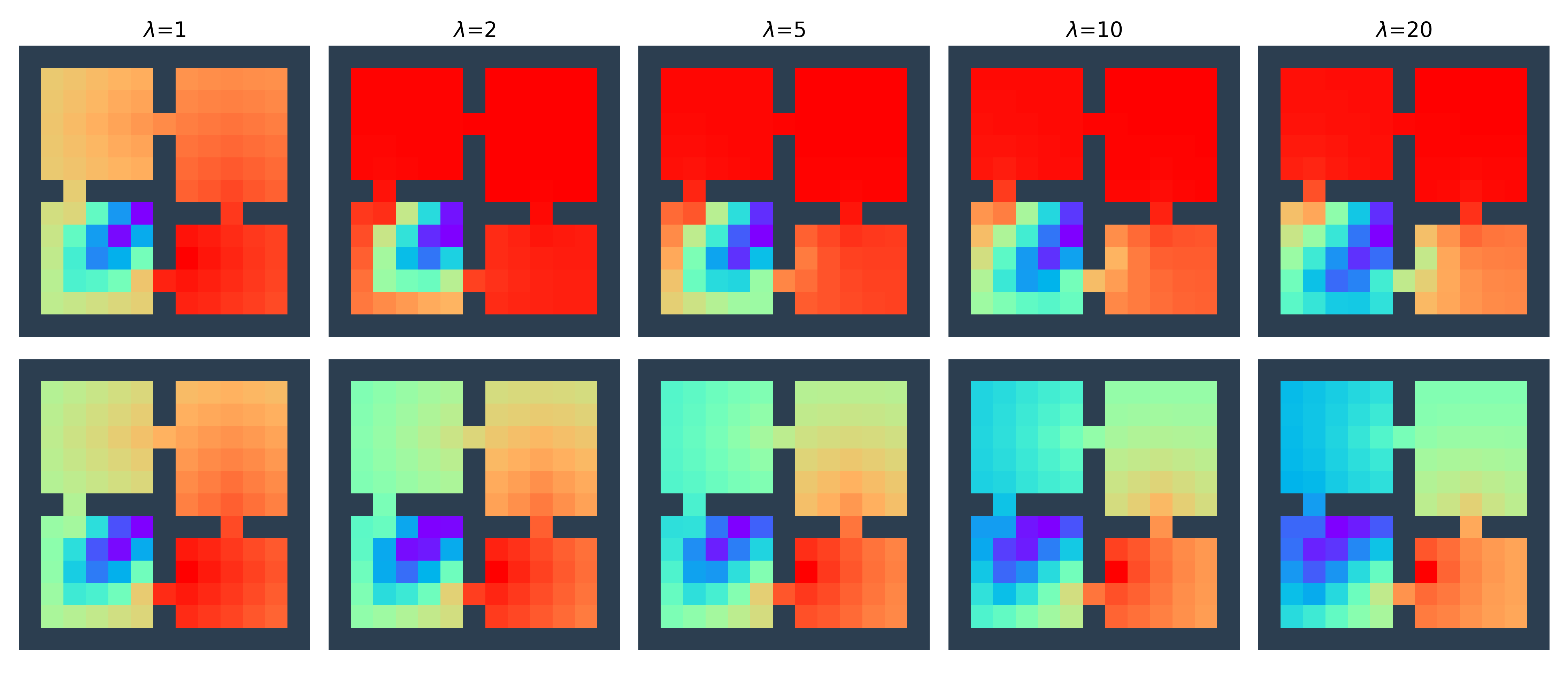}}
    \caption{
      The principal eigenvector of the DR for (top) not treating terminal states as absorbing states~\citep{tse2025rewardaware}, and (bottom) treating terminal states as absorbing states in the four rooms environment. See Appendix~\ref{appendix:environments} for a detailed explanation of the environments.
    }
    \label{fig:formulation_compare}
  \end{center}
\end{figure}

We formulate the DR slightly differently than prior work~\citep{tse2025rewardaware}. Specifically, we handle a terminal state differently. Let $N, T$ be the set of indices for non-terminal states and terminal states respectively. For example, $\mbf P_{NN}$ denotes the sub-matrix of $\mbf P$ that describes the transition probabilities from non-terminal states to non-terminal states. Recall that the DR is defined as
\begin{align}
    (\mbf R - \mbf P)\inv = \left ( \begin{bmatrix}
        \diag(\exp(-\mbf r_N  / \lambda)) & \mbf 0 \\ \mbf 0 & \diag(\exp(-\mbf r_T / \lambda))
    \end{bmatrix} - 
    \begin{bmatrix}
        \mbf P_{NN} & \mbf P_{NT} \\
        \mbf 0 & \mbf P_{TT}
    \end{bmatrix} \right) \inv .
\end{align}
\citet{tse2025rewardaware} allowed terminal states to emit any rewards at the last time step, but used 0 for most of their experiments, which is standard in RL. When the terminal state reward is 0, if we treat terminal states as absorbing states, i.e., transitioning to themselves with probability 1, then both $\diag(\exp(-\mbf r_T  / \lambda))$ and $\mbf P_{TT}$ are equal to $\mbf I$, and the DR is not well defined. To address this issue, \citet{tse2025rewardaware} considered terminal states as states that transition to a unique absorbing state with probability 1, and they removed this absorbing state from the DR matrix.

While this solution worked well for the experiments \citet{tse2025rewardaware} considered, where they used a small $\lambda=1.3$, we find that the DR formulated in this way is not robust to larger values of $\lambda$. Specifically, as shown on the top row of Fig.~\ref{fig:formulation_compare}, we find that as $\lambda$ increases, the principal eigenvector loses the property of having the largest value at the terminal state. A consequence is that, when we use such a principal eigenvector in applications such as reward shaping, the shaping reward encourages the agent to visit states other than the terminal states.

We find that we can handle terminal states differently such that we retain this property. Specifically, we treat terminal states as absorbing states, so $\mbf P_{TT} = \mbf I$. Then, to ensure that the DR is well defined, we require terminal states to have a reward of $- \delta$, where $\delta > 0$ is a tiny positive constant. We use $\delta = 0.001$ in our experiments. Using this formulation of the DR, as shown on the bottom row of Fig.~\ref{fig:formulation_compare}, the principal eigenvector is the largest at the terminal state even when $\lambda$ is large. 

Note that the theorems proved by \citet{tse2025rewardaware} can be easily adapted to this new formulation of the DR. Specifically, the symmetrized DR under the new formulation is still guaranteed to have a positive principal eigenvector. Finally, note that such formulations of the terminal states are merely part of the algorithm for computing the DR, and do not alter the underlying MDP. When evaluating the agent, for example, we still treat the environment as an episodic task that terminates with a 0 reward. \looseness=-1

\newpage
\section{Theory}
We provide derivations not included in the main text.

\subsection{Proof of Proposition~\ref{prop:gdo_loss}}
\label{appendix:proof_of_prop_gdo_loss}
\renewcommand{\thetheorem}{3.3}
\begin{proposition}
Given a transition $(s, a, r, s')$ and an extra state $s''$, where $s$ and $s''$ are sampled independently and uniformly at random, and $a$ is sampled under the default policy $\pi_d$. We can minimize Eq.~\ref{eq:dr_gdo_quad_summation} by minimizing the loss function
\begin{align}
    & \ell_{\text{GDO}}({\boldsymbol \theta}) =  \frac{1}{2} \big( u_{\boldsymbol \theta}(s) - u_{\boldsymbol \theta}(s') \big )^2 + \exp(-r / \lambda) u_{\boldsymbol \theta}(s)^2  + b' (u_{\boldsymbol \theta}(s)^2 - c') (u_{\boldsymbol \theta}(s'')^2 - c'),
\end{align}
    where $b'$ and $c'$ can be treated as hyperparameters.
\end{proposition}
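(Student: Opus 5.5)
The plan is to start from the summation form of $L_\text{GDO}(\mbf u_{\boldsymbol\theta})$ given by Proposition~\ref{prop:gdo_summation}. We rewrite each of its three terms as an expectation under the sampling scheme in the statement. We then show that $\ell_\text{GDO}$ is an unbiased estimator of a positive multiple of $L_\text{GDO}$, up to a constant. Positive rescaling does not change the minimizer, and additive constants do not change the gradient. Hence minimizing $\ell_\text{GDO}$ by stochastic gradient descent minimizes $L_\text{GDO}$, with the rescaling absorbed into the hyperparameters $b'$ and $c'$.

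Let $n = |\mathcal{S}|$, and let $s \sim \text{Unif}(\mathcal{S})$ and $s'' \sim \text{Unif}(\mathcal{S})$ be independent, with $a \sim \pi_d(\cdot\mid s)$ and $s' \sim p(\cdot \mid s,a)$. Then $s' \mid s$ is distributed according to the row $\mbf P(s,\cdot)$. We treat the three terms in turn.

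\textbf{Laplacian term.} We have
\[
\mathbb{E}\big[\tfrac12 (u(s)-u(s'))^2\big] = \tfrac{1}{2n}\sum_{s,s'} \mbf P(s,s')\,(u(s)-u(s'))^2 .
\]
Because the summand $(u(s)-u(s'))^2$ is symmetric in $(s,s')$, replacing $\mbf P$ by its transpose gives the same sum. So the sum equals $\tfrac{1}{2n}\sum_{s,s'}\symP(s,s')(u(s)-u(s'))^2$, which is $\tfrac1n$ times the first term of Eq.~\ref{eq:dr_gdo_quad_summation}. This symmetrization is the step that lets us use samples from the non-symmetric $\mbf P$.

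\textbf{Reward term.} The reward observed in the transition is $r = r(s)$. Therefore
\[
\mathbb{E}\big[\exp(-r/\lambda)\,u(s)^2\big] = \tfrac1n\sum_s \exp(-r(s)/\lambda)\,u(s)^2 .
\]

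\textbf{Penalty term.} This is the main obstacle, because $(\sum_s u(s)^2 - c)^2$ is the square of an expectation, and squaring a single-sample estimate would be biased. The fix is to use the two independent samples $s$ and $s''$. Writing $c' = c/n$, independence gives
\[
\mathbb{E}\big[(u(s)^2-c')(u(s'')^2-c')\big] = \big(\mathbb{E}[u(s)^2]-c'\big)^2 = \tfrac{1}{n^2}\Big(\sum_s u(s)^2 - c\Big)^2 .
\]
Choosing $b' = b/n$ makes this term equal to $\tfrac1n$ times the penalty term of Eq.~\ref{eq:dr_gdo_quad_summation}.

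Summing the three pieces gives $\mathbb{E}[\ell_\text{GDO}(\boldsymbol\theta)] = \tfrac1n L_\text{GDO}(\mbf u_{\boldsymbol\theta})$. The relation between $(b',c')$ and $(b,c)$ is a fixed rescaling, which is why $b'$ and $c'$ can be treated directly as hyperparameters. Finally, by linearity of expectation and differentiation, $\mathbb{E}[\nabla_{\boldsymbol\theta}\ell_\text{GDO}] = \tfrac1n\nabla_{\boldsymbol\theta} L_\text{GDO}(\mbf u_{\boldsymbol\theta})$. So sampled gradients of $\ell_\text{GDO}$ are unbiased descent directions for $L_\text{GDO}$ up to a positive scalar, which proves the claim.
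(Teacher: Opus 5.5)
Your proposal is correct and follows essentially the same route as the paper: you write each term of Eq.~\ref{eq:dr_gdo_quad_summation} as an expectation under uniform $s$, with an independent uniform $s''$ for the squared norm term, rescale by $1/n$, and absorb the constants into $b'$ and $c'$. Your symmetry argument also sharpens the paper's proof, which calls sampling $s'$ from $\mbf P$ instead of $\symP$ an implicit approximation, whereas you show it is exact when $s$ is uniform.

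There is one arithmetic slip. Matching $b'\,\frac{1}{n^2}\bigl(\sum_s u(s)^2 - c\bigr)^2$ with $\frac{1}{n}\, b\bigl(\sum_s u(s)^2 - c\bigr)^2$ requires $b' = nb$, as in the paper, not $b' = b/n$. This does not affect your conclusion, since $b'$ is a free hyperparameter.
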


\begin{proof}
    Let $n = |\mathcal{S}|$, and $\rho$ be the uniform state distribution. Eq.~\ref{eq:dr_gdo_quad_summation} can be expressed as
    \begin{align}
    & \placeholder \frac{1}{2}\sum_{s}\sum_{s'}\symP(s, s') \big(u(s) - u(s') \big)^2 + \sum_s \exp(-r(s) / \lambda) u(s)^2 + b \left ( \sum_{s}u(s)^2 - c \right )^2  \\
    &= \frac{n}{2} \sum_{s} \frac{1}{n} \sum_{s'}\symP(s, s') \big(u(s) - u(s') \big)^2  + n \sum_s \frac{1}{n} \exp(-r(s) / \lambda) u(s)^2 + b \left ( n \sum_{s} \frac{1}{n} u(s)^2 - c \right )^2 \\
    &= \frac{n}{2} \E_{s \sim \rho} \left[ \E_{s' \sim \tilde{\mbf P}(s, \cdot)}  \left[
     \big(u(s) - u(s') \big)^2
    \right]  \right] + n \E_{s \sim \rho} \left[\exp(-r(s) / \lambda) u(s)^2 \right] + bn^2 \left ( \E_{s \sim \rho}[ u(s)^2] - c / n \right )^2 \\
    & \propto \frac{1}{2} \E_{s \sim \rho} \left[ \E_{s' \sim \tilde{\mbf P}(s, \cdot)}  \left[
     \big(u(s) - u(s') \big)^2
    \right]  \right] + \E_{s \sim \rho} \left [\exp(-r(s) / \lambda) u(s)^2 \right] + bn \left ( \E_{s \sim \rho}[ u(s)^2 - c / n ]\right )^2 \\
    &= \frac{1}{2} \E_{s \sim \rho} \left[ \E_{s' \sim \tilde{\mbf P}(s, \cdot)}  \left[
     \big(u(s) - u(s') \big)^2
    \right]  \right] + \E_{s \sim \rho} \left [\exp(-r(s) / \lambda) u(s)^2 \right] + b' \left ( \E_{s \sim \rho}[ u(s)^2 - c' ]\right )^2. \label{eq:gdo_expectation}
\end{align}
Finally, $\ell_\text{GDO}$ is just a sample-based approximation of Eq.~\ref{eq:gdo_expectation}. Note that in the final sampling, we implicitly approximate $s' \sim \symP(s, \cdot)$ with $s' \sim \mbf P(s, \cdot)$, simply because sampling from $\symP$ requires sampling backward in time.
\end{proof}

\subsection{Proof of Proposition~\ref{prop:riemannian}}
\label{appendix:riemannian}

\renewcommand{\thetheorem}{3.5}
\begin{proposition}
    The distance in the $\mbf v$-space is characterized by the Riemannian metric tensor
    \begin{align}
        \mbf G(\mbf v) = \diag(\exp(2\mbf v)).
    \end{align}\looseness=-1
\end{proposition}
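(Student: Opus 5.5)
The approach is to pull the Euclidean metric on the $\mbf u$-space back to the $\mbf v$-space through the map $\mbf u = \exp(\mbf v)$, following the proof sketch in the main text. Concretely, we fix a point $\mbf v$ with image $\mbf u = \exp(\mbf v)$. We then take an infinitesimal step $\delta \mbf v$ and let $\delta \mbf u$ be the induced change, $\delta \mbf u = \exp(\mbf v + \delta \mbf v) - \exp(\mbf v)$. The metric $\mbf G(\mbf v)$ is defined by requiring
\begin{align}
    d_{\mathcal{U}}(\mbf u, \mbf u + \delta \mbf u)^2 = d_{\mathcal{V}}(\mbf v, \mbf v + \delta \mbf v)^2 = \delta \mbf v^\top \mbf G(\mbf v) \delta \mbf v
\end{align}
to hold to second order in $\delta \mbf v$. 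Here $d_{\mathcal{U}}$ is the Euclidean distance, which is the implicit assumption of standard gradient descent on $\mbf u$.

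The first step is to linearize the map. The Jacobian of $\mbf v \mapsto \exp(\mbf v)$ is $\mbf J(\mbf v) = \diag(\exp(\mbf v))$, because the exponential acts entry-wise: $\partial u(i)/\partial v(j) = \exp(v(i))$ if $i=j$ and $0$ otherwise. A first-order Taylor expansion then gives
\begin{align}
    \delta \mbf u = \mbf J(\mbf v)\,\delta \mbf v + O(\|\delta \mbf v\|^2) = \exp(\mbf v) \odot \delta \mbf v + O(\|\delta \mbf v\|^2).
\end{align}

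The second step is to substitute this into the Euclidean distance. This yields
\begin{align}
    \|\delta \mbf u\|_2^2 = \delta \mbf v^\top \mbf J(\mbf v)^\top \mbf J(\mbf v)\, \delta \mbf v + O(\|\delta \mbf v\|^3) = \sum_i \exp(2 v(i))\, \delta v(i)^2 + O(\|\delta \mbf v\|^3).
\end{align}
Matching the quadratic forms identifies $\mbf G(\mbf v) = \mbf J^\top \mbf J = \diag(\exp(2\mbf v))$. This matrix is symmetric and positive definite for every $\mbf v$, so it is a valid Riemannian metric tensor.

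The only delicate point is to justify that it suffices to match the quadratic forms to leading order. I would handle this by noting that the metric is by definition the infinitesimal, first-order notion of distance. Equivalently, $\mbf G$ is the pullback metric $\mbf J^\top \mbf I \mbf J$ of the identity metric on the $\mbf u$-space, and the higher-order remainder terms vanish relative to $\|\delta \mbf v\|^2$ as $\epsilon \to 0$ in the natural gradient constraint. No earlier result is needed beyond the definitions in the natural gradient preliminaries.
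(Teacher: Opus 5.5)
Your proposal is correct and follows essentially the same route as the paper, which also defines $\mbf G(\mbf v)$ by equating $d_{\mathcal{U}}^2(\mbf u, \mbf u + \delta \mbf u)$ with $\delta \mbf v^\top \mbf G(\mbf v)\, \delta \mbf v$ and Taylor-expands $\exp(v(s) + \delta v(s)) - \exp(v(s))$ entrywise, dropping higher-order terms. Your pullback form $\mbf G = \mbf J^\top \mbf J$ is the matrix version of that entrywise computation, and you track the $O(\|\delta \mbf v\|^3)$ remainder more carefully than the paper's ``$\approx$'' step.
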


\begin{proof}

Suppose we have an optimization problem
\begin{align}
    \min_{\mbf u \in \mathcal{U}} L(\mbf u),
\end{align}
where $L$ is an objective function, and $\mathcal{U}$ is the parameter space for the optimization problem. 
We assume the Euclidean distance for $\mathcal{U}$. That is, the distance between two points $\mbf u_1, \mbf u_2 \in \mathcal{U}$ is computed as $d_\mathcal{U}(\mbf u_1, \mbf u_2) = \sqrt{(\mbf u_1 - \mbf u_2)^\top (\mbf u_1 - \mbf u_2)}$. Let $\delta \mbf u$ be a gradient step. The distance between the updated and the original parameters is $d_\mathcal{U}(\mbf u, \mbf u + \delta \mbf u) = \sqrt{\delta \mbf u^\top \delta \mbf u} = \|\delta \mbf u\|_2$, which is just the Euclidean norm of the gradient step. 


We can re-parameterize and optimize the logarithm of $\mbf u$ instead. We now have the following:
\begin{align}
    \min_{\mbf v \in \mathcal{V}} L(\exp(\mbf v)),
\end{align}
where $\exp$ here denotes element-wise exponentiation, and $\mathcal{V}$ is the parameter space for this new optimization problem. 

{Let $\delta \mbf u$ be the change induced in $\mbf u$ by $\delta \mbf v$, which is a small change in $\mbf v$. That is, $\delta \mbf u = \exp(\mbf v + \delta \mbf v) - \exp(\mbf v)$.}
Given 1) the relationship $\mbf u = \exp(\mbf v)$, and 2) the Euclidean distance for $\mathcal{U}$, the Riemannian metric tensor, $\mbf G(\mbf v)$, can be found by solving the following equation for small $\delta \mbf v$~\citep{amari1998natural2}:
\begin{align}
    d_\mathcal{U}^2(\mbf u, \mbf u+\delta \mbf u) = d_\mathcal{V}^2(\mbf v, \mbf v + \delta \mbf v) = \delta \mbf v^\top \mbf G(\mbf v) \delta \mbf v, \label{eq:riemannian_equation}
\end{align}
where $d_\mathcal{V}$ is the corresponding distance in the parameter space $\mathcal{V}$. See Fig.~\ref{fig:nat_grad} for a visual illustration.

\begin{figure}[h!]
  \vskip 0.2in
  \begin{center}
    \centerline{\includegraphics[width=0.25\columnwidth]{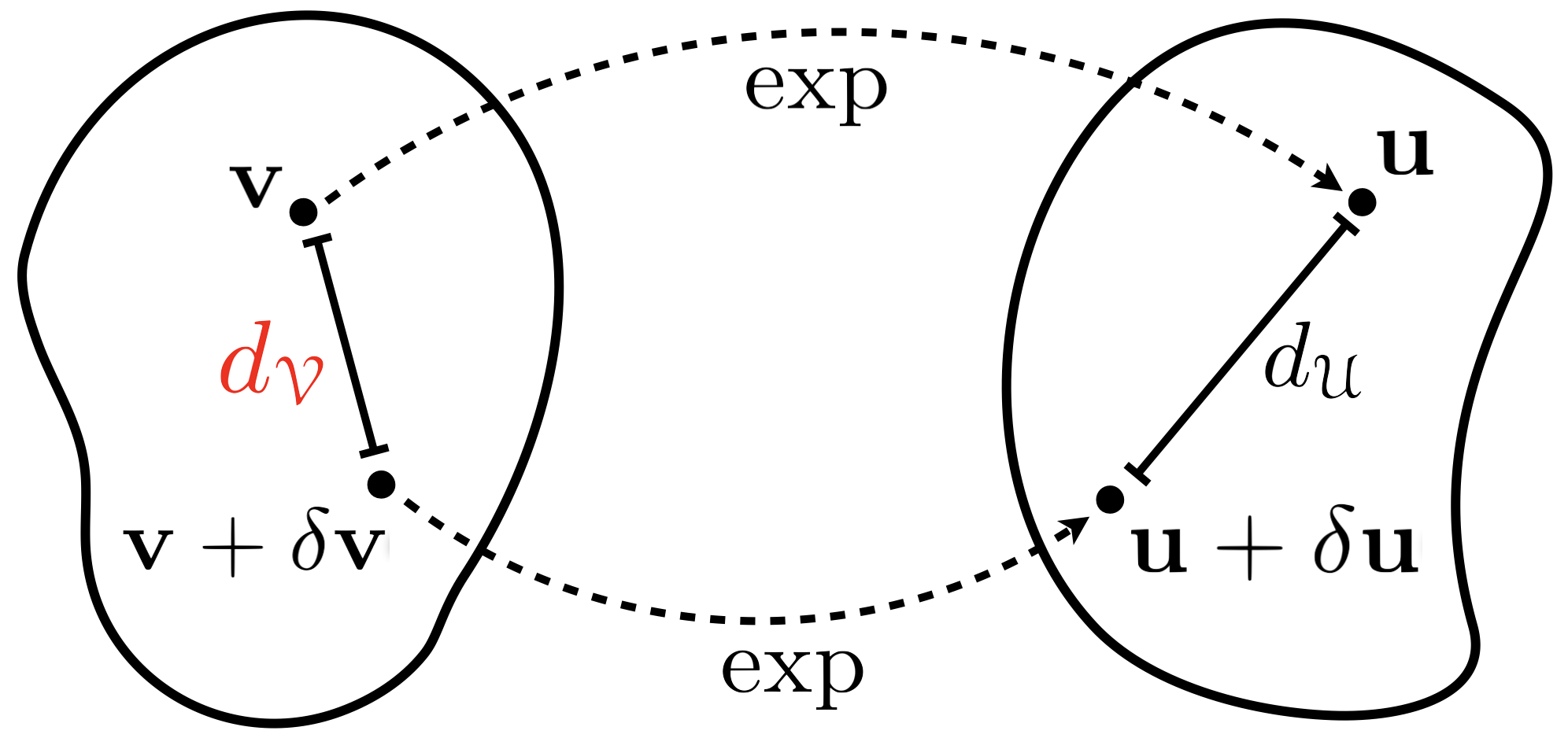}}
    \caption{
      $d_\mathcal{U}$ denotes the distance in the $\mbf u$-space, which is assumed to be Euclidean. We aim to find the induced distance in the $\mbf v$-space, denoted by $d_\mathcal{V}$, such that $d_\mathcal{V}$ between points in the $\mbf v$-space is measured by $d_\mathcal{U}$ of their images under the $\exp$ transformation.
    }
    \label{fig:nat_grad}
  \end{center}
\end{figure}

Starting from the left-hand side, we have
\begin{align}
    d^2_\mathcal{U}(\mbf u, \mbf u + \delta \mbf u)
    &=d^2_\mathcal{U} (\exp(\mbf v), \exp(\mbf v + \delta \mbf v)) \\
    &= \sum_s [\exp(v(s) + \delta v(s)) - \exp(v(s)) ]^2\\
    &= \sum_s [\exp(v(s)) (\exp(\delta v(s)) - 1)]^2 \\
    &= \sum_s [\exp(v(s)) (1 + \delta v(s) + O(\delta v(s)^2) - 1)]^2 \quad \quad \text{(Taylor expansion)} \\
    &\approx \sum_s [\exp(v(s)) \delta v(s)]^2  \quad \quad \text{($O(\delta v(s)^2)$ is ignored since $\delta \mbf v$ is small)} \\
    &= \sum_s \exp(2v(s)) \delta v(s)^2 \\
    &= \delta \mbf v^\top \operatorname{diag}(\exp(2 \mbf v)) \delta \mbf v.
\end{align}
Therefore, the Riemannian metric tensor is $\mbf G(\mbf v) = \diag( \exp(2 \mbf v))$.
\end{proof}


\subsection{Proof of Proposition~\ref{prop:nat_grad_second_term}}
\label{appendix:nat_grad_second_term_proof}
\renewcommand{\thetheorem}{3.7}
\begin{proposition}
    The natural gradient of $ L^{\log}_\text{GDO}(\mbf v_{\boldsymbol{\phi}})$ with respect to the output of the neural network parameterized by $\boldsymbol{\phi}$, $\mbf v_{\boldsymbol{\phi}}$, is proportional to
    \begin{align}
        \exp(-\mbf r / \lambda) -  \exp(-\mbf v_{\boldsymbol \phi}) \odot \symP \mbf u_{\boldsymbol \phi} + b'(\mbf u_{\boldsymbol \phi}^\top \mbf u_{\boldsymbol \phi} - c') \mbf 1.
    \end{align}
\end{proposition}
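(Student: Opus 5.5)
We compute the Euclidean gradient of $L_\text{GDO}$ with respect to $\mbf u$, pass it through the chain rule to $\mbf v$, and then precondition by $\mbf G(\mbf v)\inv$ from Proposition~\ref{prop:riemannian}. This produces exactly the vector in Eq.~\ref{eq:dr_quad_nat_grad}, namely $\exp(-\mbf v_{\boldsymbol\phi}) \odot \partial L_\text{GDO}(\mbf u_{\boldsymbol\phi})/\partial \mbf u_{\boldsymbol\phi}$. The proof therefore reduces to evaluating that Hadamard product and discarding a global positive constant.

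\textbf{Step 1: the gradient in $\mbf u$.} Write $L_\text{GDO}(\mbf u) = \mbf u^\top(\mbf I + \mbf R - \symP)\mbf u + b(\mbf u^\top\mbf u - c)^2$. Since $\mbf I + \mbf R - \symP$ is symmetric ($\mbf R$ is diagonal and $\symP$ is symmetric by construction), we get
\begin{align}
\frac{\partial L_\text{GDO}}{\partial \mbf u} = 2(\mbf I + \mbf R - \symP)\mbf u + 4b(\mbf u^\top\mbf u - c)\mbf u .
\end{align}

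\textbf{Step 2: the Hadamard product.} Multiply entrywise by $\exp(-\mbf v_{\boldsymbol\phi}) = \mbf u_{\boldsymbol\phi}^{-1}$, which is valid because $\mbf u_{\boldsymbol\phi} > 0$ entrywise. The diagonal parts collapse:
\begin{align}
\exp(-\mbf v)\odot \mbf R\mbf u &= \exp(-\mbf r/\lambda), & \exp(-\mbf v)\odot \mbf u &= \mbf 1 .
\end{align}
The off-diagonal part stays as $-\exp(-\mbf v)\odot\symP\mbf u$. The result is
\begin{align}
2\Big[\exp(-\mbf r/\lambda) - \exp(-\mbf v_{\boldsymbol\phi})\odot\symP\mbf u_{\boldsymbol\phi} + \big(1 + 2b(\mbf u_{\boldsymbol\phi}^\top\mbf u_{\boldsymbol\phi} - c)\big)\mbf 1\Big].
\end{align}

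\textbf{Step 3: matching the stated form.} Drop the overall factor $2$, which is where ``proportional to'' comes in. The remaining obstacle is the extra constant vector $\mbf 1$ coming from the $\mbf I$ shift. I would absorb it into the penalty term by setting $b' = 2b$ and $c' = c - 1/(2b)$, so that
\begin{align}
1 + 2b(\mbf u^\top\mbf u - c) = b'(\mbf u^\top\mbf u - c').
\end{align}
This is the same kind of hyperparameter reparameterization used in the proof of Proposition~\ref{prop:gdo_loss}, where $b', c'$ are treated as free hyperparameters.

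It should also be noted that this absorption is legitimate: the added $\mbf I$ merely shifts the spectrum, as in Proposition~\ref{prop:1}. One could equally work with $\mbf R - \symP$ plus a penalty and reach the same form. Either way we arrive at Eq.~\ref{eq:dr_gdo_quad_log_grad}.
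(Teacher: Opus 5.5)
Your proposal is correct and follows the paper's proof essentially step for step: compute $\partial L_\text{GDO}/\partial \mbf u$, absorb the $\mbf I$-shift into the penalty via $b' = 2b$ and $c' = c - 1/b'$, then take the Hadamard product with $\exp(-\mbf v_{\boldsymbol\phi})$, which turns $\mbf R\mbf u_{\boldsymbol\phi}$ into $\exp(-\mbf r/\lambda)$ and $\mbf u_{\boldsymbol\phi}$ into $\mbf 1$. The only difference is that you absorb the constant after the Hadamard product rather than before, which is immaterial.
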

\begin{proof}
\begin{align}
    \frac{\partial L_\text{GDO}(\mbf u_{\boldsymbol{\phi}})}{\partial \mbf u_{\boldsymbol \phi}}
    &= \frac{\partial}{\partial \mbf u_{\boldsymbol \phi}} (\mbf u_{\boldsymbol \phi}^\top (\mbf I + \mbf R - \symP) \mbf u_{\boldsymbol \phi} + b(\mbf u^\top_{\boldsymbol \phi} \mbf u_{\boldsymbol \phi} - c)^2 )\\
    &\propto (\mbf I + \mbf R - \symP) \mbf u_{\boldsymbol \phi} + 2b (\mbf u_{\boldsymbol \phi}^\top \mbf u_{\boldsymbol \phi} - c)\mbf u_{\boldsymbol \phi} \\
    &= (\mbf R - \symP) \mbf u_{\boldsymbol \phi} + [b'(\mbf u^\top_{\boldsymbol \phi} \mbf u_{\boldsymbol \phi} - c) + b'/b'] \mbf u_{\boldsymbol \phi} \\
    &= (\mbf R - \symP)\mbf u_{\boldsymbol \phi} + b'(\mbf u_{\boldsymbol \phi}^\top \mbf u_{\boldsymbol \phi} - c') \mbf u_{\boldsymbol \phi}, \label{eq:nat_grad_second_term}
\end{align}
where we absorb constants into $b'$ and $c'$. Then, the Hadamard product of $\exp(-\mbf v_{\boldsymbol{\phi}})$ with Eq.~\ref{eq:nat_grad_second_term} is
\begin{align}
    &\placeholder \exp(-\mbf v_{\boldsymbol \phi}) \odot [(\mbf R - \symP)\mbf u_{\boldsymbol \phi} + b'(\mbf u_{\boldsymbol \phi}^\top \mbf u_{\boldsymbol \phi} - c') \mbf u_{\boldsymbol \phi} ] 
    \\&= \exp(-\mbf v_{\boldsymbol \phi}) \odot  (\mbf R - \symP)\mbf u_{\boldsymbol \phi} + b'(\mbf u_{\boldsymbol \phi}^\top \mbf u_{\boldsymbol \phi} - c') \mbf 1 \\
    &= \exp(-\mbf v_{\boldsymbol \phi}) \odot \mbf R \mbf u_{\boldsymbol \phi} -  \exp(-\mbf v_{\boldsymbol \phi}) \odot \symP \mbf u_{\boldsymbol \phi}  + b'(\mbf u_{\boldsymbol \phi}^\top \mbf u_{\boldsymbol \phi} - c') \mbf 1 \\
    &=  \exp(-\mbf v_{\boldsymbol \phi}) \odot \exp(-\mbf r / \lambda) \odot \mbf u_{\boldsymbol \phi}   -  \exp(-\mbf v_{\boldsymbol \phi}) \odot \symP \mbf u_{\boldsymbol \phi} + b'(\mbf u_{\boldsymbol \phi}^\top \mbf u_{\boldsymbol \phi} - c') \mbf 1 \\
    &=  \exp(-\mbf r / \lambda) -  \exp(-\mbf v_{\boldsymbol \phi}) \odot \symP \mbf u_{\boldsymbol \phi} + b'(\mbf u_{\boldsymbol \phi}^\top \mbf u_{\boldsymbol \phi} - c') \mbf 1.
\end{align}
\end{proof}

\subsection{Proof of Proposition~\ref{prop:s_th_entry_estimate}}
\label{appendix:s_th_entry_proof}

\renewcommand{\thetheorem}{3.8}
\begin{proposition}
    The $s$-th entry of Eq.~\ref{eq:dr_gdo_quad_log_grad} is estimated by \looseness=-1
    \begin{align}
    \exp(-r/\lambda) - \exp(v_{\boldsymbol \phi}(s') - v_{\boldsymbol \phi}(s)) + b''(\exp(2v_{\boldsymbol \phi}(s'')) - c''), \label{eq:62}
\end{align}
where $a$ in $(s, a, r, s')$ is sampled from the default policy, $\pi_d$, and $s''$ is a state sampled uniformly at random. \looseness=-1
\end{proposition}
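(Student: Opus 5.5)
I will treat the three terms of Eq.~\ref{eq:dr_gdo_quad_log_grad} separately. The plan is to show that each term's $s$-th entry is an expectation, possibly up to a positive rescaling that is absorbed into hyperparameters. Each expectation is taken over the sampling scheme of Proposition~\ref{prop:gdo_loss}: $s$ is fixed (it indexes the entry), $s'$ is drawn from the transition, and $s''\sim\rho$ uniformly. Replacing each expectation by a single sample then gives the stated estimator.

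\textbf{Reward term.} The $s$-th entry of $\exp(-\mbf r/\lambda)$ is $\exp(-r(s)/\lambda)$. Since the reward of the transition $(s,a,r,s')$ is the state reward $r(s)$, this term is observed exactly as $\exp(-r/\lambda)$ from the sample.

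\textbf{Transition term.} Using $\mbf u_{\boldsymbol\phi}=\exp(\mbf v_{\boldsymbol\phi})$, the $s$-th entry of $\exp(-\mbf v_{\boldsymbol\phi})\odot\symP\mbf u_{\boldsymbol\phi}$ is
\begin{align}
\exp(-v_{\boldsymbol\phi}(s))\sum_{s'}\symP(s,s')\exp(v_{\boldsymbol\phi}(s')) = \E_{s'\sim\symP(s,\cdot)}\big[\exp(v_{\boldsymbol\phi}(s')-v_{\boldsymbol\phi}(s))\big].
\end{align}
A single next state therefore gives an unbiased estimate. As in the proof of Proposition~\ref{prop:gdo_loss}, I would approximate $s'\sim\symP(s,\cdot)$ by $s'\sim\mbf P(s,\cdot)$, i.e.\ the next state generated by $a\sim\pi_d$ and $p$. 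This is because sampling from the symmetrized kernel requires sampling backward in time. This is the one step that is only approximate rather than unbiased, and it is why the statement says ``estimated''.

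\textbf{Norm term.} This term is $b'(\mbf u_{\boldsymbol\phi}^\top\mbf u_{\boldsymbol\phi}-c')$, which is the same constant in every entry. With $n=|\mathcal S|$,
\begin{align}
\mbf u_{\boldsymbol\phi}^\top\mbf u_{\boldsymbol\phi}-c' = n\,\E_{s''\sim\rho}\big[\exp(2v_{\boldsymbol\phi}(s''))-c'/n\big].
\end{align}
Setting $b''=nb'$ and $c''=c'/n$, this becomes $b''\,\E_{s''\sim\rho}[\exp(2v_{\boldsymbol\phi}(s''))-c'']$. A single independent uniform sample $s''$ gives an unbiased estimate, mirroring the rescaling step in the proof of Proposition~\ref{prop:gdo_loss}.

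Summing the three single-sample estimates yields the expression in the statement. The only delicate points are bookkeeping. First, $b''$ and $c''$ are rescaled versions of $b'$ and $c'$ and are treated as free hyperparameters. Second, the $\symP$ versus $\mbf P$ substitution means the estimator is unbiased only for the unsymmetrized kernel. I expect this substitution to be the main point to state carefully rather than a genuine obstacle.
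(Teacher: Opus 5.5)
Your proposal is correct and follows essentially the same route as the paper: write the $s$-th entry as $\exp(-r(s)/\lambda) - \E_{s'\sim\symP(s,\cdot)}[\exp(v_{\boldsymbol\phi}(s')-v_{\boldsymbol\phi}(s))] + b''(\E_{s''\sim\rho}[\exp(2v_{\boldsymbol\phi}(s''))]-c'')$ with $b''=nb'$ and $c''=c'/n$, then replace each expectation by a single sample. The paper's proof of this proposition leaves implicit that sampling $s'$ via $\pi_d$ draws from $\mbf P(s,\cdot)$ rather than $\symP(s,\cdot)$; it states this only in the proof of Proposition~\ref{prop:gdo_loss}, so your explicit remark on it is, if anything, slightly more careful.
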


\begin{proof}
    Let $n=|\mathcal{S}|$, and $\rho$ be the uniform state distribution. The $s$-th entry of Eq.~\ref{eq:dr_gdo_quad_log_grad} is
    \begin{align}
        & \placeholder [\exp(-\mbf r / \lambda) -  \exp(-\mbf v_{\boldsymbol \phi}) \odot \symP \mbf u_{\boldsymbol \phi} + b'(\mbf u_{\boldsymbol \phi}^\top \mbf u_{\boldsymbol \phi} - c') \mbf 1](s) \\
        &= \exp(-r(s) / \lambda) - \exp(-v_{\boldsymbol{\phi}}(s)) [\symP \mbf u_{\boldsymbol \phi}](s) + b'(\mbf u_{\boldsymbol \phi}^\top \mbf u_{\boldsymbol \phi} - c')  \\ 
        &= \exp(-r(s) / \lambda) - \exp(-v_{\boldsymbol{\phi}}(s)) \E_{s' \sim \symP(s, \cdot)}[u_{\boldsymbol{\phi}}(s')] + b' \left(\sum_{s''} u_{\boldsymbol{\phi}}^2(s'') - c' \right) \\
        &= \exp(-r(s) / \lambda) - \exp(-v_{\boldsymbol{\phi}}(s)) \E_{s' \sim \symP(s, \cdot)}[\exp(v_{\boldsymbol{\phi}}(s'))] + b' \left(n\sum_{s''} \frac{1}{n} u_{\boldsymbol{\phi}}^2(s'') - c' \right) \\
        &= \exp(-r(s) / \lambda) - \E_{s' \sim \symP(s, \cdot)}[\exp(v_{\boldsymbol{\phi}}(s') -v_{\boldsymbol{\phi}}(s))] + b'n \left(\E_{s'' \sim \rho}[ u_{\boldsymbol{\phi}}^2(s'')] - c' /n\right)  \\
        &= \exp(-r(s) / \lambda) - \E_{s' \sim \symP(s, \cdot)}[\exp(v_{\boldsymbol{\phi}}(s') -v_{\boldsymbol{\phi}}(s))] + b'' \left(\E_{s'' \sim \rho}[ \exp(2v_{\boldsymbol{\phi}}(s''))] - c''\right).
    \end{align}
    Eq.~\ref{eq:62} is a sample-based approximation of the above equation.
\end{proof}

\subsection{Proof of Proposition~\ref{prop:surrogate_loss}}
\label{appendix:surrogate_loss_proof}
\renewcommand{\thetheorem}{3.9}
\begin{proposition}
Given a transition $(s, a, r, s')$ and an extra state $s''$, where $s$ and $s''$ are sampled independently and uniformly at random, and $a$ is sampled under the default policy $\pi_d$. We can minimize $ L^{\log}_\text{GDO}(\mbf v_{\boldsymbol{\phi}})$ with natural gradient descent by minimizing 
\begin{align}
    \ell^{\text{NG}}_{\text{GDO}}({\boldsymbol \phi}) = \llbracket \exp(-r(s) / \lambda) - \exp(v_{\boldsymbol \phi}(s') - v_{\boldsymbol \phi}(s)) 
     + b''(\exp(2v_{\boldsymbol \phi}(s'')) - c'') \rrbracket v_{\boldsymbol \phi}(s), 
\end{align}
where $\llbracket \cdot \rrbracket$ denotes stop-gradient, and $\lambda, b'', c''$ are hyperparameters.
\end{proposition}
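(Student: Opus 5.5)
The plan is to show that the gradient of $\ell^{\text{NG}}_{\text{GDO}}$ is an unbiased (up to the usual $\symP$ vs.\ $\mbf P$ sampling approximation) estimate of the natural-gradient update of Eq.~\ref{eq:dr_quad_nat_grad}, up to a positive constant. The first step is to write the natural-gradient parameter update, via Eq.~\ref{eq:dr_quad_nat_grad}, as
\begin{align}
    \sum_{s} \frac{\partial v_{\boldsymbol\phi}(s)}{\partial \boldsymbol\phi}\, g(s), \qquad g(s) := \left[\exp(-\mbf v_{\boldsymbol \phi}) \odot \frac{\partial L_{\text{GDO}}(\mbf u_{\boldsymbol\phi})}{\partial \mbf u_{\boldsymbol\phi}}\right](s).
\end{align}
By Proposition~\ref{prop:nat_grad_second_term}, $g$ is proportional to Eq.~\ref{eq:dr_gdo_quad_log_grad}. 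Rewrite the sum as $n\,\E_{s\sim\rho}\big[g(s)\,\partial v_{\boldsymbol\phi}(s)/\partial\boldsymbol\phi\big]$, where $\rho$ is uniform and $n=|\mathcal{S}|$. The factor $n$ is positive and can be absorbed into the step size.

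Next, I would invoke Proposition~\ref{prop:s_th_entry_estimate}. For a fixed $s$, the quantity
\begin{align}
    \hat g(s,s',s'') = \exp(-r/\lambda) - \exp(v_{\boldsymbol\phi}(s')-v_{\boldsymbol\phi}(s)) + b''(\exp(2v_{\boldsymbol\phi}(s''))-c'')
\end{align}
has conditional expectation over $s'$ and $s''$ equal to $g(s)$. Here $s''\sim\rho$ is drawn independently of $(s,s')$. This independence matters because $\hat g$ is linear in the $s''$ term and in $\exp(v(s'))$, so the expectations factor without a double-sampling bias. Since $\partial v_{\boldsymbol\phi}(s)/\partial\boldsymbol\phi$ depends only on $s$, the tower property then gives
\begin{align}
    \E_{s,s',s''}\!\left[\hat g\, \frac{\partial v_{\boldsymbol\phi}(s)}{\partial\boldsymbol\phi}\right] = \E_{s\sim\rho}\!\left[g(s)\frac{\partial v_{\boldsymbol\phi}(s)}{\partial\boldsymbol\phi}\right].
\end{align}

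Finally, I would differentiate the surrogate. Because of the stop-gradient, $\llbracket \hat g\rrbracket$ is treated as a constant, so $\nabla_{\boldsymbol\phi}\ell^{\text{NG}}_{\text{GDO}} = \hat g\,\partial v_{\boldsymbol\phi}(s)/\partial\boldsymbol\phi$. This is exactly the single-sample estimator above. SGD on $\ell^{\text{NG}}_{\text{GDO}}$ is therefore, in expectation, natural gradient descent on $L^{\log}_{\text{GDO}}(\mbf v_{\boldsymbol\phi})$.

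The main obstacle is bookkeeping of constants rather than any conceptual step. The proportionality constants $n$, $2$ and $b$ must be absorbed into $b''$, $c''$ and the step size while keeping every absorbed factor positive, so the descent direction is preserved. The earlier $s'\sim\symP(s,\cdot)$ versus $s'\sim\mbf P(s,\cdot)$ substitution must also be noted as an approximation carried over from Proposition~\ref{prop:gdo_loss}, not something proved exactly.
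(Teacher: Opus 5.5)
Your proposal is correct and follows the paper's proof essentially step for step. You expand the natural-gradient update of Eq.~\ref{eq:dr_quad_nat_grad} as a sum over states, rewrite it as $n$ times an expectation under the uniform distribution, and substitute the per-state sample estimate from Propositions~\ref{prop:nat_grad_second_term} and~\ref{prop:s_th_entry_estimate}; you then observe that the gradient of the stop-gradient surrogate is exactly that single-sample estimator. Your tower-property argument, which uses the independence of $s''$ from $s$, and your note about sampling $s'$ from $\mathbf P$ rather than $\tilde{\mathbf P}$ make precise two points the paper leaves implicit.
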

\begin{proof}
    Let $n = |\mathcal{S}|$, and $\rho$ be the uniform state distribution. Recall that the gradient of $ L^{\log}_\text{GDO}(\mbf v_{\boldsymbol{\phi}})$ with respect to the $i$-th entry of the parameters, $\boldsymbol{\phi}$, is (see Eq.~\ref{eq:dr_quad_nat_grad})
    \begin{align}
     & \placeholder   \frac{\partial \mbf v_{\boldsymbol \phi}}{\partial \phi_i} \cdot \left(
    \exp(- \mbf v_{\boldsymbol \phi}) \odot  \frac{\partial L_{\text{GDO}}(\mbf u_{\boldsymbol{\phi}})}{\partial \mbf u_{\boldsymbol \phi}} \right ) \\
    &= \sum_s  \left [ \frac{\partial \mbf v_{\boldsymbol \phi}}{\partial \phi_i} \right] (s)  \left[
    \exp(- \mbf v_{\boldsymbol \phi}) \odot  \frac{\partial L_{\text{GDO}}(\mbf u_{\boldsymbol{\phi}})}{\partial \mbf u_{\boldsymbol \phi}} \right ](s) \\
    &= n \sum_s \frac{1}{n} \frac{\partial v_{\boldsymbol \phi}(s) }{\partial \phi_i} \left[
    \exp(- \mbf v_{\boldsymbol \phi}) \odot  \frac{\partial L_{\text{GDO}}(\mbf u_{\boldsymbol{\phi}})}{\partial \mbf u_{\boldsymbol \phi}} \right ](s) \\
    &\propto  \E_{s \sim \rho} \left [
    \frac{\partial v_{\boldsymbol \phi}(s) }{\partial \phi_i}  \left[
    \exp(- \mbf v_{\boldsymbol \phi}) \odot  \frac{\partial L_{\text{GDO}}(\mbf u_{\boldsymbol{\phi}})}{\partial \mbf u_{\boldsymbol \phi}} \right ](s)
    \right ]
    \end{align}
We can estimate this expectation by sampling a state, $s$, uniformly at random. Applying Propositions~\ref{prop:nat_grad_second_term} and~\ref{prop:s_th_entry_estimate} gives
\begin{align}
    &\placeholder \frac{\partial v_{\boldsymbol \phi}(s) }{\partial \phi_i}  \left[
    \exp(- \mbf v_{\boldsymbol \phi}) \odot  \frac{\partial L_{\text{GDO}}(\mbf u_{\boldsymbol{\phi}})}{\partial \mbf u_{\boldsymbol \phi}} \right ](s) \\
    &= \frac{\partial v_{\boldsymbol \phi}(s) }{\partial \phi_i} \big (\exp(-r/\lambda) - \exp(v_{\boldsymbol \phi}(s') - v_{\boldsymbol \phi}(s)) + b''(\exp(2v_{\boldsymbol \phi}(s'')) - c'' ) \big ), \label{eq:72}
\end{align}
where $(s, a, r, s')$ is the transition generated by sampling $a$ from the default policy, $\pi_d$, and $s''$ is another state sampled uniformly at random. Eq.~\ref{eq:72} is a sampled-based approximation of the gradient of $ L^{\log}_\text{GDO}(\mbf v_{\boldsymbol{\phi}})$ with respect to $\phi_i$. Finally, it is trivial to show that the gradient of $\ell^{\text{NG}}_\text{GDO}(\boldsymbol{\phi})$ recovers Eq.~\ref{eq:72}.
\end{proof}

\subsection{Proof of Theorem~\ref{thm:eigenspace}}
\label{appendix:eigenspace}

\renewcommand{\thetheorem}{3.11}
\begin{theorem}
  Let $r(s_T) = -\delta\ \forall s_T \in \mathcal{S}_T$, where $\delta > 0$. When $\delta\to0^+$, the columns of the DR ($(\mbf R - \mbf P)\inv$) corresponding to all terminal states form a basis for the eigenspace of the principal eigenvalue of the DR.
\end{theorem}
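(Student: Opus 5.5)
The plan is to exploit the block structure of $\mbf R - \mbf P$ from Appendix~\ref{appendix:formulation_diff}, with non-terminal indices $N$ and terminal indices $T$. Terminal states are absorbing, so $\mbf P_{TT} = \mbf I$ and $\mbf P_{TN} = \mbf 0$. With $r(s_T) = -\delta$ we get $\mbf R_T = e^{\delta/\lambda}\mbf I$. Writing $\varepsilon := e^{\delta/\lambda} - 1$ and $\mbf A := \mbf R_N - \mbf P_{NN}$,
\begin{align}
    \mbf R - \mbf P = \begin{bmatrix} \mbf A & -\mbf P_{NT} \\ \mbf 0 & \varepsilon \mbf I \end{bmatrix}, \qquad
    (\mbf R - \mbf P)\inv = \begin{bmatrix} \mbf A\inv & \varepsilon\inv \mbf A\inv \mbf P_{NT} \\ \mbf 0 & \varepsilon\inv \mbf I \end{bmatrix}.
\end{align}
Note that $\varepsilon \to 0^+$ exactly when $\delta \to 0^+$. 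The first step is to check that $\mbf A$ is invertible and independent of $\delta$. Since $r(s) < 0$ on $N$, each diagonal entry of $\mbf R_N$ is strictly greater than $1$, while $\mbf P_{NN}$ is substochastic. Hence $\mbf A$ is strictly row diagonally dominant, so it is a nonsingular M-matrix. Consequently $\mbf A\inv$ has a fixed, finite spectral radius $\rho_N$.

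Next I identify the principal eigenvalue. The DR is block upper triangular, so its spectrum is the spectrum of $\mbf A\inv$ together with $\varepsilon\inv$, the latter with algebraic multiplicity $|\mathcal{S}_T|$. Once $\varepsilon\inv > \rho_N$, which holds for all sufficiently small $\delta$, $\varepsilon\inv$ is the principal eigenvalue. (This is the precise form of ``$\delta$ is the principal eigenvalue'': the corresponding eigenvalue of $\mbf R - \mbf P$ is $\varepsilon \approx \delta/\lambda$.)

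Then I compute the eigenspace explicitly. A vector $\mbf x = (\mbf x_N, \mbf x_T)$ satisfies $(\mbf R - \mbf P)\mbf x = \varepsilon \mbf x$ if and only if $(\mbf A - \varepsilon \mbf I)\mbf x_N = \mbf P_{NT}\mbf x_T$; the $T$-block equation is vacuous. For small $\varepsilon$ the matrix $\mbf A - \varepsilon\mbf I$ is invertible, so the eigenspace is exactly $\{((\mbf A - \varepsilon \mbf I)\inv \mbf P_{NT}\mbf x_T,\ \mbf x_T)\}$, and it has dimension $|\mathcal{S}_T|$. A natural basis is $\mbf f_t = ((\mbf A - \varepsilon\mbf I)\inv \mbf P_{NT}\mbf b_t,\ \mbf b_t)$ for $t \in \mathcal{S}_T$. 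The DR column for terminal state $t$, rescaled by $\varepsilon$, is $\mbf g_t = (\mbf A\inv \mbf P_{NT}\mbf b_t,\ \mbf b_t)$.

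The final step compares the two families. We have $\|\mbf f_t - \mbf g_t\| = \|[(\mbf A - \varepsilon\mbf I)\inv - \mbf A\inv]\mbf P_{NT}\mbf b_t\| = O(\varepsilon) \to 0$. Both families are linearly independent because their $T$-blocks form the identity. Therefore the span of the terminal columns converges to the principal eigenspace, for instance in the sense that the principal angles between the two subspaces go to $0$. Equivalently, the two subspaces coincide in the limit.

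The main obstacle is making ``when $\delta \to 0^+$'' precise. For any fixed $\delta > 0$ the terminal columns are not exactly eigenvectors, so the statement has to be read as convergence of subspaces (or of normalized bases), and I would state it that way. A secondary point is justifying that the eigenvalue $\varepsilon\inv$ is semisimple, so that ``eigenspace'' is unambiguous. The explicit computation already gives geometric multiplicity $|\mathcal{S}_T|$, equal to the algebraic multiplicity, so this should follow directly.
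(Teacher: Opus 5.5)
Your proposal is correct and follows essentially the same route as the paper's proof. Both use the block-triangular decomposition of $\mbf R - \mbf P$, diagonal dominance of the non-terminal block (the paper phrases this with Gershgorin discs) to keep that part of the spectrum away from zero, the eigenspace parametrization $((\mbf A - \varepsilon\mbf I)\inv \mbf P_{NT}\mbf y, \mbf y)$, and the block inverse to read off the terminal columns. Your version is slightly more careful than the paper's: you keep $\varepsilon = e^{\delta/\lambda}-1$ exact rather than replacing it by $\delta$, you correctly identify the DR's principal eigenvalue as $\varepsilon\inv$ rather than $\delta$, and you make ``as $\delta \to 0^+$'' precise as an $O(\varepsilon)$ convergence of subspaces.
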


\begin{proof}

    Recall that the (non-symmetrized) DR is $(\mbf R - \mbf P)\inv = (\diag(\exp(-\mbf r / \lambda) - \mbf P)\inv$. We assume without loss of generality that $\lambda=1$.
    Let $n = |\mathcal{S}|$ and $m = |\mathcal{S}_T|$.
    Suppose we order the states such that the $m$ terminal states are ordered last, i.e., they correspond to the last columns of the DR. 
    Then, $\mbf P = \begin{bmatrix}
        \mbf A & \mbf B\\ \mbf 0 &  \mbf I
    \end{bmatrix},$ for some matrices $\mbf A$ and $\mbf B$,
    and $\mbf R - \mbf P = \begin{bmatrix}
        \mbf A' & -\mbf B \\ \mbf 0  & (\exp(\delta) - 1) \mbf I
    \end{bmatrix}$ for some matrix $\mbf A'$.  
    When $\delta \to 0^+$, $\exp(\delta) - 1 \approx \delta$, so we write $\mbf R - \mbf P$ as $ \begin{bmatrix}
        \mbf A' & -\mbf B \\ \mbf 0  & \delta \mbf I
    \end{bmatrix}$.

    It is easy to see that $\delta$ is an eigenvalue of $\mbf R - \mbf P$ with algebraic multiplicity $m$. Note that $\mbf R - \mbf P$ might have complex eigenvalues, and we order eigenvalues by their magnitudes.
    Using the Gershgorin circle theorem, we can show that all eigenvalues of $\mbf R - \mbf P$ have non-zero magnitude. It is obvious that for the rows corresponding to terminal states, the Gershgorin discs are centered at $\delta$ with radius 0. For a row $i$ corresponding to a non-terminal state, the Gershgorin disc is centered at $\exp(-r(s_i)) - P(i, i)$. Since we assume $r(s) < 0\ \forall s \in \mathcal{S}$, $\exp(-r(s_i)) - P(i,i) > 1 - P(i, i) = \sum_{j\neq i}P(i, j)$, where $\sum_{j\neq i} P(i, j)$ is the radius of the disc. Thus, any discs corresponding to non-terminal states do not contain $0$. Then, all eigenvalues of $\mbf R - \mbf P$ must have magnitudes greater than $0$.
    Since $\delta$ can be arbitrarily small, $\delta$ is the eigenvalue of $\mbf R - \mbf P$ with the smallest magnitude, and thus the principal eigenvalue of the DR, $(\mbf R - \mbf P)\inv$. 

    We now study the principal eigenvectors of the DR, which are equal to the eigenvectors of $\mbf R - \mbf P$ corresponding to $\delta$. An eigenvector corresponding to $\delta$ satisfies
    \begin{align}
        \begin{bmatrix}
            \mbf A' & -\mbf B \\ \mbf 0 &  \delta  \mbf I
        \end{bmatrix} \begin{bmatrix}
            \mbf x \\ \mbf y
        \end{bmatrix} = 
        \delta \begin{bmatrix}
            \mbf x \\ \mbf y
        \end{bmatrix},
    \end{align}
    which gives the equation 
    \begin{align}
        \mbf A' \mbf x - \mbf B \mbf y &= \delta \mbf x \\
        \mbf x &= (\mbf A' - \delta \mbf I)\inv \mbf B \mbf y
    \end{align}
    An eigenvector corresponding to $\delta$ then has the general form
    \begin{align}
        \begin{bmatrix}
            (\mbf A' - \delta \mbf I)\inv \mbf B \mbf y \\\mbf y 
        \end{bmatrix}, \label{eq:dr_eigvec_form}
    \end{align}
    where $\mbf y$ can be any vector in $\mathbb{R}^{m}$. Therefore, the geometric multiplicity of $\delta$ is also $m$.

    We now derive the form of the columns of the DR corresponding to terminal states. By block-wise matrix inversion, the DR is \looseness=-1
    \begin{align}
        \begin{bmatrix}
            \mbf A' & -\mbf B \\ \mbf 0  & \delta \mbf I
        \end{bmatrix} \inv = 
        \begin{bmatrix}
            (\mbf A')\inv & \frac{1}{\delta} (\mbf A')\inv \mbf B \\ \mbf 0 & \frac{1}{\delta} \mbf I
        \end{bmatrix}.
    \end{align}
    Then, the columns corresponding to the terminal states have the form
    \begin{align}
        \begin{bmatrix}
            \frac{1}{\delta} (\mbf A')\inv \mbf B \mbf b_j \\ 
            \frac{1}{\delta} \mbf b_j, 
        \end{bmatrix} \label{eq:dr_column_form}
    \end{align}
    where $\mbf b_i \in \mathbb{R}^m$ is the standard basis vector with a $1$ at entry $i$, and $j \in \{1, 2, \dots, m\}$. Comparing Eq.~\ref{eq:dr_eigvec_form} and~\ref{eq:dr_column_form}, we can see that as $\delta \to 0^+$, these linearly independent columns of the DR span the same space as the eigenvectors corresponding to the eigenvalue $\delta$. Thus, they form a basis for the eigenspace of the principal eigenvalue of the DR.
\end{proof}





\newpage
\section{Experiment Details}
\label{appendix:exp_detail}
We present more experiment details.

\subsection{Environments}
\label{appendix:environments}

We use the set of environments shown in Fig.~\ref{fig:env}.

\begin{figure}[ht]
  \vskip 0.2in
  \begin{center}
    \centerline{\includegraphics[width=0.8\textwidth]{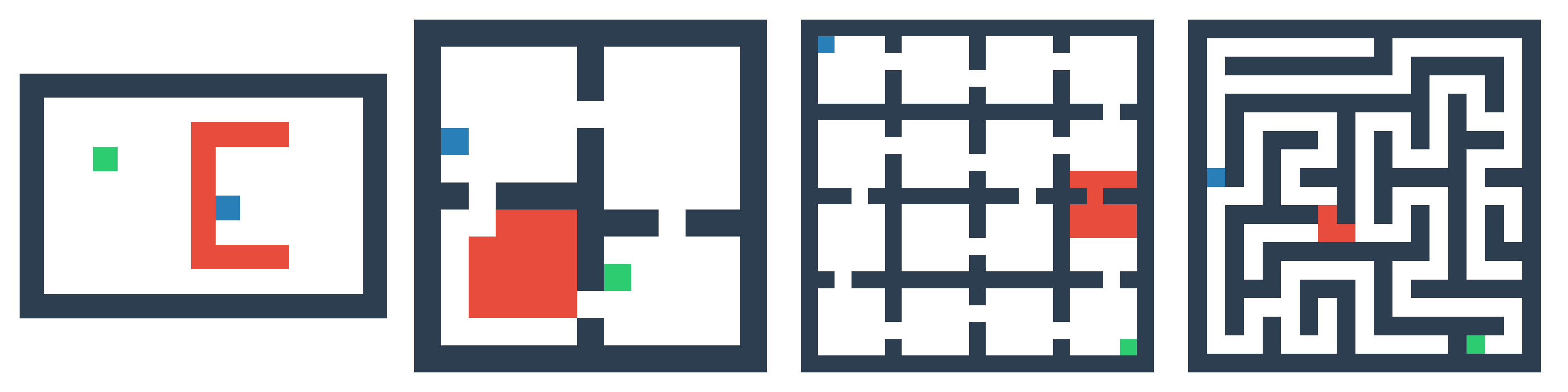}}
    \caption{
      The set of episodic environments used by \citet{tse2025rewardaware}. From left to right: 1) {grid task}, 2) four rooms, 3) grid room, and 4) grid maze. The start state is in blue. The agent receives $-1$ reward at every time step unless it steps on red tiles ($-20$ reward) or reaches the goal in green ($0$ reward).
    }
    \label{fig:env}
  \end{center}
\end{figure}

\subsection{Details on DROGO Experiments}
\label{appendix:drogo_details}
We provide more details on the experiments using DROGO to learn the principal eigenvector of the DR.

\subsubsection{Neural Network Architecture}
\label{appendix:neural_net}
We use the same neural network architecture for one-hot and coordinates representations of states. The network consists of 4 fully connected layers with 128 hidden units followed by a final output layer. For pixel inputs, the network consists of 3 convolutional layers (16, 32, and 64 filters respectively) with $3 \times 3$ kernels, stride 1, and padding of 1. The first two convolutional layers are each followed by $2 \times 2$ max-pooling with stride 2. The output is then flattened and fed to a fully connected layer with 128 hidden units, followed by a final output layer.
All fully connected and convolutional layers, except for the output layer, are followed by the rectified linear unit (ReLU) activation~\citep{nair2010rectified}.

\subsubsection{Hyperparameters}
\label{appendix:hyperparam}
We tune the hyperparameters manually. We use the same set of hyperparameters for all environments and all state representations. We use $\lambda=20$, which prevents the $\exp(-r(s) / \lambda)$ term in the DROGO loss (Eq.~\ref{eq:drogo_loss}) from being too large and causing instabilities. We use the RMSprop optimizer~\cite{hinton2012rmsprop}, a step size of 1e$-5$, and a batch size of $2000$. We apply gradient norm clipping of 0.5. For coordinates and pixel inputs, we normalize the inputs to have the range $[-0.5, 0.5]$.

\subsubsection{Visualization of the Learned and Ground-truth Principal Eigenvectors}
\label{appendix:visual_comparison}

We visually compare the learned eigenvectors and the ground-truth principal eigenvectors of the DR in Fig.~\ref{fig:eigvec_vis}.

\begin{figure}[t!]
  \vskip 0.2in
  \begin{center}
    \centerline{\includegraphics[width=\textwidth]{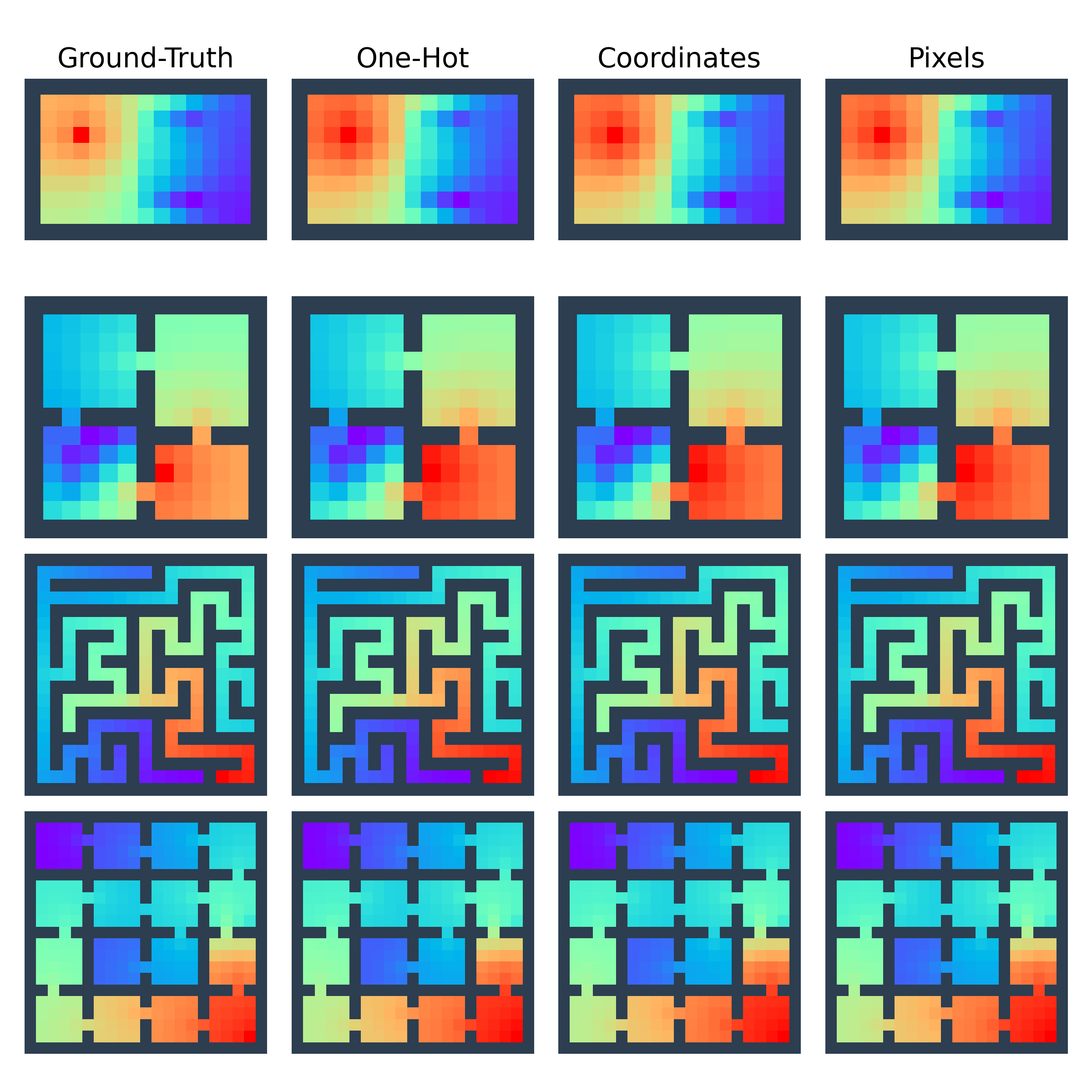}}
    \caption{
      Visualization of the vectors learned using DROGO, averaged over 10 seeds, for different state representations. Columns from left to right: 1) the ground-truth principal eigenvectors of the DR, 2) the eigenvectors learned for one-hot inputs, 3) the eigenvectors learned for coordinates inputs, and 4) the eigenvectors learned for pixels inputs. Note that the eigenvectors here appear visually different from those presented by \citet{tse2025rewardaware} since they used $\lambda=1.3$, while we use $\lambda=20$.
    }
    \label{fig:eigvec_vis}
  \end{center}
\end{figure}

\newpage

\quad
\newpage

\subsection{Ablation Study}
\label{appendix:ablation}

\begin{table*}[t]
  \caption{The characterization of different loss functions.}
  \label{table:loss_fn}
  \begin{center}
    \begin{small}
      \begin{sc}
        \renewcommand{\arraystretch}{1.2}
        \begin{tabular}{lccc}
          \toprule
          Loss Function  & \makecell{Log-Space \\ Parameterization}  & Natural Gradient & \makecell{Quadratic \\ Norm Penalty}    \\
          \midrule
          $\ell_\text{GDO}$ & \cross & \cross & \tick \\
          $\ell^{\log}_{\text{GDO}}$ & \tick & \cross & \tick \\
          $\ell^{\text{NG}}_{\text{GDO}}$ & \tick & \tick & \tick \\
          $\ell_{\text{DROGO}}$ & \tick & \tick & \cross \\
          \bottomrule
        \end{tabular}
      \end{sc}
    \end{small}
  \end{center}
  \vskip -0.1in
\end{table*}

We empirically compare $\ell_\text{DROGO}$ (Eq.~\ref{eq:drogo_loss}), ${\ell}^{\text{NG}}_{\text{GDO}}$ (Eq.~\ref{eq:dr_quad_surr_loss}), ${\ell}^{\log}_\text{GDO}$ (Eq.~\ref{eq:dr_gdo_quad_log_loss}), and $\ell_\text{GDO}$ (Eq.~\ref{eq:dr_gdo_quad_loss}). Recall that only $\ell_\text{GDO}$ does not perform log-space parameterization. Of the remaining three, ${\ell}^{\log}_\text{GDO}$ does not adopt natural gradient. Finally, $\ell_{\text{DROGO}}$ uses the point-wise constraint, while ${\ell}^{\text{NG}}_\text{GDO}$ uses the quadratic norm penalty. See Table~\ref{table:loss_fn} for a summary. 

To compare these loss functions, we compute the cosine similarity between the vectors learned using the loss functions and the logarithm of the ground-truth principal eigenvector of the DR. Note that for $\ell_\text{GDO}$, there is no guarantee that we can safely apply the logarithm to the learned vector, so we apply the logarithm to the absolute values of the learned vector.

We first compare these loss functions in the tabular case, where we represent the learned vector using an array of length $|\mathcal{S}|$. Let $b$ be the hyperparameter determining the strength of the quadratic penalty term, and $\alpha$ be the step size. For $\ell_\text{GDO}$, ${\ell}^{\log}_\text{GDO}$, and ${\ell}^{\text{NG}}_\text{GDO}$, we perform a grid search over $\alpha \in [10^{-5}, 3\cdot 10^{-5}, 10^{-4}, 3 \cdot 10^{-4}, 10^{-3}]$ and $b \in [0.1, 0.5, 1, 2]$. For $\ell_\text{DROGO}$, which does not require $b$, we simply perform a grid search over the same values of $\alpha$. Note that we perform the hyperparameter search in the largest environment, grid room, and use the best hyperparameters found for all environments. 

Fig.~\ref{fig:tabular_ablation} shows the cosine similarity between the learned and ground-truth principal eigenvectors of the DR for the different loss functions. Apart from ${\ell}^{\log}_\text{GDO}$, all loss functions learn vectors that converge to the ground-truth principal eigenvector. Note that $\ell^{\text{NG}}_{\text{GDO}}$ appears to have poor performance in grid room since one out of 10 seeds fails to converge. $\ell_\text{GDO}$, which simply applies the GDO without log-space parameterization, performs well in the tabular setting. However, as we will see later, this loss function does not work well with neural networks. For the remaining loss functions, ${\ell}^{\log}_{\text{GDO}}$ fails to converge to the ground-truth eigenvector, while ${\ell}^{\text{NG}}_\text{GDO}$ and $\ell_\text{DROGO}$ does, confirming the importance of the natural gradient. 

\begin{figure}[ht]
  \vskip 0.2in
  \begin{center}
    \centerline{\includegraphics[width=\textwidth]{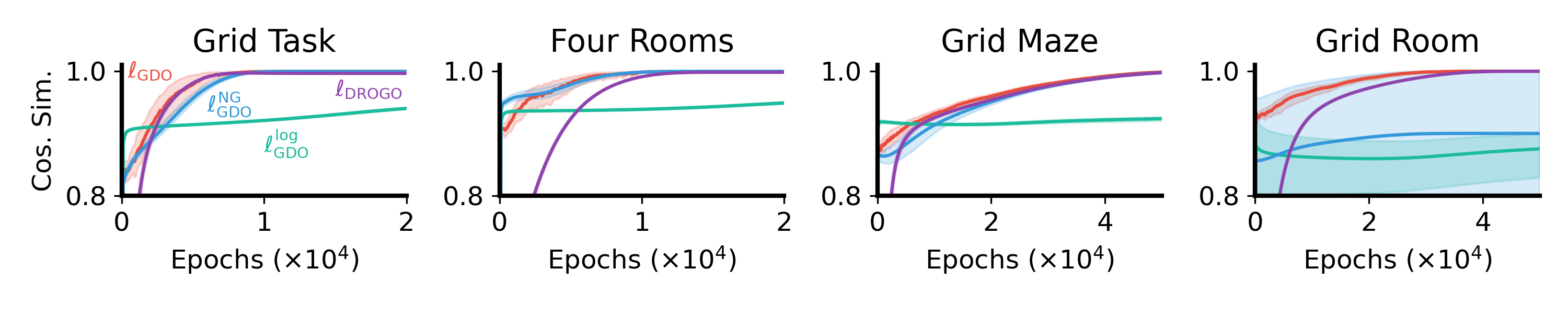}}
    \caption{
      The cosine similarity, averaged over 10 seeds, between the learned and ground-truth principal eigenvectors for different loss functions in the tabular setting. The shaded area indicates 95\% bootstrap confidence interval.
    }
    \label{fig:tabular_ablation}
  \end{center}
\end{figure}

We now compare the loss functions using neural networks taking the coordinates of states as inputs. For $\ell_\text{DROGO}$, we tune the hyperparameters manually, as detailed in Appendix~\ref{appendix:drogo_details}. Because we tuned more hyperparameters, we considered fewer values per hyperparameter, effectively considering fewer hyperparameter settings than the grid search described above. For the remaining loss functions, we use the same hyperparameters as $\ell_\text{DROGO}$ except for the step size, $\alpha$, and the quadratic penalty coefficient, $b$. We identify the values for $\alpha$ and $b$ using the same search procedure as in the tabular case. 

Figure~\ref{fig:ablation} shows the cosine similarity between the learned and ground-truth principal eigenvectors of the DR. We can see that only $\ell_\text{DROGO}$ can learn the principal eigenvector. While in the tabular setting, we observe that vectors learned using $\ell_\text{GDO}$ converge to the ground-truth principal eigenvector, the approximation errors with neural networks and the small magnitudes of the ground-truth eigenvector prevent it from learning the principal eigenvector well in the function approximation case. We empirically observe that the vectors learned using $\ell_\text{GDO}$ have negative values. ${\ell}^{\log}_\text{GDO}$, on the other hand, fails for the same reason of lacking natural gradient as in the tabular setting. Finally, using the quadratic norm penalty (${\ell}^{\text{NG}}_\text{GDO}$) fails due to the difficult of balancing $b$. When $b$ is large, the penalty interferes with the GDO. When $b$ is small, it is not sufficient to constrain the norm of the learned vector. 

\begin{figure}[ht]
  \vskip 0.2in
  \begin{center}
    \centerline{\includegraphics[width=\textwidth]{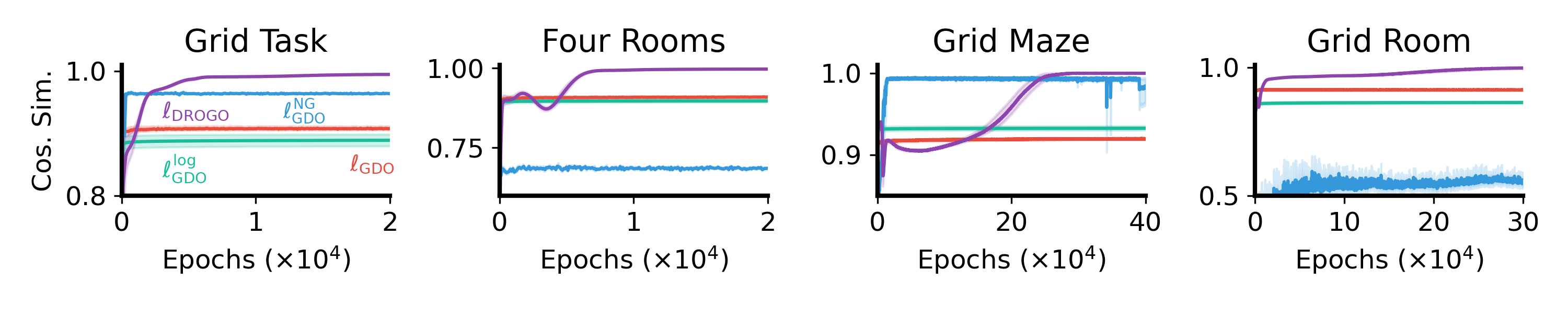}}
    \caption{
      The cosine similarity, averaged over 10 seeds, between the learned and ground-truth principal eigenvectors for different loss functions when using coordinates as inputs to the neural network. The shaded area indicates 95\% bootstrap confidence interval.
    }
    \label{fig:ablation}
  \end{center}
\end{figure}

\subsection{Learning Curves for Reward Shaping Experiments}
\label{appendix:reward_shaping}
We present the learning curves for the reward shaping experiments in Fig.~\ref{fig:reward_shaping}. Since the learning curves for different state representations of the DR highly overlap, we here present only the learning curve for the pixels representation of states. 

\begin{figure}[ht]
  \vskip 0.2in
  \begin{center}
    \centerline{\includegraphics[width=\textwidth]{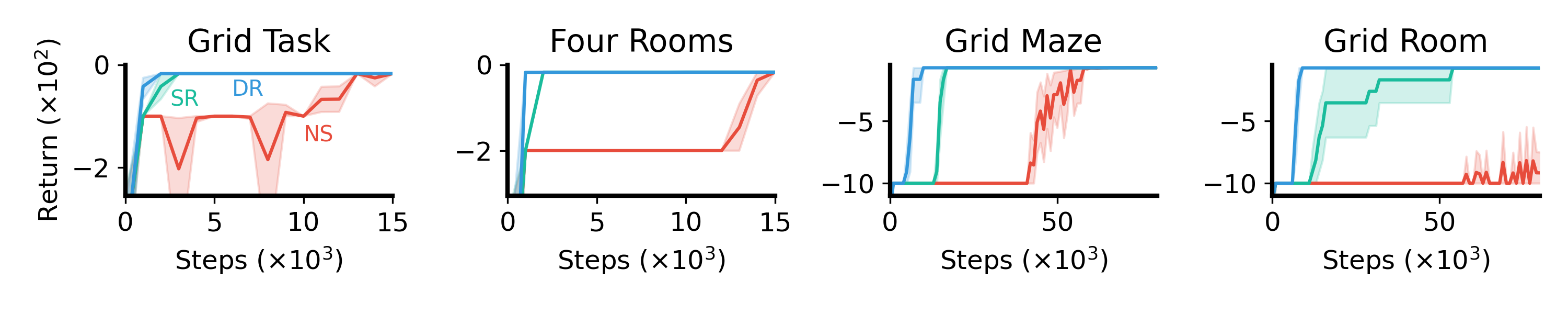}}
    \caption{
      The total return, averaged over 10 seeds, obtained when using the DR for reward shaping, the SR for reward shaping, and not performing reward shaping (NS). The shaded area indicates 95\% bootstrap confidence interval.
    }
    \label{fig:reward_shaping}
  \end{center}
\end{figure}

\end{document}